\theoremstyle{plain}
\newtheorem{theorem}{Theorem}[section]
\newtheorem{lemma}[theorem]{Lemma}
\newtheorem{corollary}[theorem]{Corollary}
\theoremstyle{definition}
\newtheorem{definition}[theorem]{Definition}
\theoremstyle{remark}
\icmltitlerunning{Data Augmentation vs. Equivariant Networks: A Theory of Generalization on Dynamics Forecasting}
\begin{document}

\twocolumn[
\icmltitle{Data Augmentation vs. Equivariant Networks: \\A Theory of Generalization on Dynamics Forecasting}




\begin{icmlauthorlist}
\icmlauthor{Rui Wang}{sd}
\icmlauthor{Robin Walters}{neu}
\icmlauthor{Rose Yu}{sd}
\end{icmlauthorlist}

\icmlaffiliation{sd}{University of California San Diego}
\icmlaffiliation{neu}{Northeastern University}

\icmlcorrespondingauthor{Rui Wang}{ruw020@ucsd.edu}

\icmlkeywords{Machine Learning, ICML}

\vskip 0.3in
]



\printAffiliationsAndNotice{}  

\begin{abstract}
Exploiting symmetry in dynamical systems is a powerful way to improve the generalization of deep learning. The model learns to be invariant to transformation, and hence is more robust to distribution shift.  Data augmentation and equivariant networks are two major approaches to inject symmetry into learning. However, their exact role in improving generalization is not well understood.  In this work, we derive the generalization bounds for data augmentation and equivariant networks, characterizing their effect on learning in a unified framework.
Unlike most prior theories for the i.i.d. setting, we focus on non-stationary dynamics forecasting with complex temporal dependencies. 

\end{abstract}

\section{Introduction}
Symmetry plays an important role in the success of deep learning; incorporating symmetries into layers or training deep neural nets can improve generalizability and robustness \cite{bronstein2021geometric, shorten2019survey, lopes2019improving, wang2021incorporating}. There are two main techniques to train models that preserve symmetries. In data augmentation, one add samples to the training set which are transformed versions of other samples.  This enables the model to learn invariance to symmetry transformations and noise \cite{hernandez2018data, dao2019kernel, hernandez2018further, rajput2019does, ratner2017learning, zhou2022do, perez2017effectiveness, wen2020time}. The other line of work is the design of equivariant neural networks, which have also achieved remarkable success in learning image data \cite{worrall2019deep, cohen2016steerable, weiler2019e2cnn, cohen2018spherical} and physical dynamics \cite{wang2021incorporating, wang2022approximately, shi2021learning}. 

Here, we consider the problem of learning  dynamical systems, where the data is non-i.i.d and the symmetry is rarely perfect. A perfectly equivariant model may have trouble learning partial or approximated symmetries in real-world data. Thus, some work has recently explored the idea of building approximately equivariant models and empirically demonstrated the benefits of it in modeling real-world data \cite{van2022relaxing, romero2021learning, finzi2021residual}. For example, \cite{wang2022approximately} designed approximately equivariant models by relaxing the weight sharing schemes in the equivariant convolution networks. 

Most prior works on data augmentation techniques and equivariant networks are  purely empirical. There is no theory that characterizes and compares their behavior. Even though both approaches exploit the symmetry in the prediction task, their exact role  to improve generalization is not well understood. Furthermore, it is not clear when these approaches are beneficial or in what regime one approach is preferred over the other. While there exist some theories of generalization for data augmentation, all of them are under the i.i.d. assumption \cite{rajput2019does, ratner2017learning, sannai2021improved}. For equivariant networks, the theory of generalization is even more scarce.

In this work, we present a theory of generalization for dynamics forecasting, where the data are non-stationary and non-mixing time series.  We theoretically analyze and compare the generalization strength of data augmentation versus equivariant networks. We show that when the underlying dynamics is symmetric, equivariant networks achieve a tighter generalization bound than data augmentation. Furthermore,   when the symmetries in the data are only approximate,  the generalization bound for approximately equivariant networks \cite{wang2022approximately} is further improved. 

In summary, our contributions include:
\vspace{-7pt}
\begin{itemize}
\item We formally characterize the behavior of dynamic forecasting with deep learning under the assumption that the underlying dynamical system preserves a certain amount of symmetry. 
\item We derive the generalization  bounds for data augmentation  and  equivariant networks, including both the strict and approximately equivariant networks for non-stationary and non-mixing time series. 
\item We prove that equivariant networks have a tighter generalization upper bound than data augmentation. When the data do not have perfect symmetries, approximately equivariant models tend to have better generalizability than the other two approaches. 
\end{itemize}
\vspace{-12pt}


\section{Background}
\subsection{Statistical Learning Theory for Time Series.}

We consider forecasting deterministic dynamics where the learner receives $N$ observed time series $\{X^{(1)}, ..., X^{(N)}\}$ with length $T$ of a dynamical system \cite{Wang2021MetaLearningDF, wang2021bridging}. Each time series $X^{(i)}$ is a sample from a dynamical system where the system parameters are drawn i.i.d. from a given distribution. Even though the system parameters are independently sampled,   each time series can be highly non-stationary and exhibit complex  dependencies.

Denote $Z_t^{(i)} = (X_{t-k-1:t-1}^{(i)}, X_{t}^{(i)}) \in \mathcal{X}^k\times\mathcal{X}$ as a training sample (a subsequence  of time series $i$ at time $t$). $X_{t-k-1:t-1}^{(i)}$ and $X_{t}^{(i)}$ are the input and output of a forecasting model. For a loss function $\mathcal{L}: \mathcal{X}  \times \mathcal{X} \to [0, \infty)$ and a hypothesis set $\mathcal{F}$ of functions that map from $\mathcal{X}^k$ to $\mathcal{X}$, we want to minimize its empirical risk:
\[R_n(\theta) = \frac{1}{N} \sum_{i=1}^N\sum_{t=1}^T q_t \mathcal{L}(f_{\theta}(X_{t-k-1:t-1}^{(i)}), X_{t}^{(i)})\]
where $\theta$ represent the parameters in $f$. For simplicity, we use $L(\theta, Z_t^{(i)})$ to denote $\mathcal{L}(f_{\theta}(X_{t-k-1:t-1}^{(i)}), X_{t}^{(i)})$

Note that $q_1,. . , q_T$ are real numbers, which in the standard statistical learning scenarios are chosen to be all equal to $\frac{1}{T}$. We follow the time series forecasting setting in \cite{kuznetsov2020discrepancy}. For non-stationary dynamics, different $Z_t$ may follow different distributions, and thus distinct weights could be assigned to the errors made on different sample points, depending on their relevance to forecasting the future $Z_{T+1}$.
The learning objective is to find a $\theta$ that achieves a small test error, $\mathbb{E} L(\theta, Z_{T+1})$.

To derive the generalization bound, \citet{bousquet2003introduction} and \citet{rakhlin2014statistical} generalizes the classic Rademacher Complexity \cite{gnecco2008approximation} to time series learning, as defined below,

\begin{definition}[Sequential Rademacher Complexity,  \citet{bousquet2003introduction,rakhlin2014statistical}]
Given a function class $\mathcal{G} \subset \mathbb{R}^\mathcal{Z}$, we define the sequential Rademacher complexity of class $\mathcal{G}$ as:
$$\mathcal{R}_T^{sq}(\mathcal{G}) = \mathbb{E}_{\bm{z}}\mathbb{E}_{\bm{\sigma}} [\text{sup}_{g\in\mathcal{G}}\sum_{t=1}^T\sigma_t q_t g(\bm{z}_t(\bm{\sigma}))]$$
where $\bm{z}$ is a real-valued complete binary tree that is a sequence $(z_1, . . . , z_T)$ of $T$ mappings  $z_t : \{\pm 1\}^{t-1} \to \mathbb{R}$ for $t \in [1,...,T]$, and $\bm{\sigma}$ is a sequence of Rademacher random variables, which is also a path in the tree $\bm{\sigma} = (\sigma_1,...,\sigma_{T-1}) \in \{\pm 1\}^{T-1}$. 

In our forecasting setting, the sequential Rademacher complexity of a loss class can be defined more specifically as:
\[
\mathcal{R}_T^{sq}(L\circ\Theta)= \mathbb{E}_{\bm{\sigma}}\!\left[\text{sup}_{\theta\in\Theta}\frac{1}{N} \sum_{i=1}^N\sum_{t=1}^T\sigma_t q_t L(\theta, Z_t^{(i)}))\right]
\]
\end{definition}

\subsection{Equivariance and Invariance.} 
Symmetry is often described through the equivariance or invariance of a given function. In this subsection, we give the formal definitions of data augmentation, equivariance and approximate equivariance in the deterministic dynamics forecasting setting. 
\paragraph{Equivariant Functions} A function $f$ respecting the symmetry coming from a group $G$ is said to be equivariant.
\begin{definition}[G-equivariant function]
Assume a group representation $\rho_{\text{in}}$ of $G$ acting on $X$ and $\rho_{\text{out}}$ acting on $Y$. We say a function $f\colon X \to Y$ is \textbf{$G$-equivariant} if 
$$
f( \rho_{\text{in}}(g)(x)) = \rho_{\text{out}}(g) f(x)
$$
for all $x \in X$ and $g \in G$. The function $f$ is \textbf{$G$-invariant} if $f( \rho_{\text{in}}(g)(x)) = f(x)$ for all $x \in X$ and $g \in G$.  This is a special case of equivariance for the case $\rho_{\mathrm{out}}(g) = 1$. Equivariant neural networks \cite{cohen2016steerable, weiler2019e2cnn, cohen2016group} learn equivariant functions through weight-sharing and weight-tying.
\end{definition}

We define equivariance error, which quantifies the amount of symmetry the function $f$ contains.
\begin{definition}[Equivariance Error]
Let $f \colon X \to Y$ be a function and $G$ be a group. Assume that $G$ acts on $X$ and $Y$ via representation $\rho_{\text{in}}$ and $\rho_{\text{out}}$. Then the \textbf{equivariance error} of $f$ is 
$$
\|f\|_{EE} = \sup_{x,g}
\|f(\rho_{\text{in}}(g)(x)) - \rho_{\text{out}}(g)f(x)\|.
$$
\end{definition}
For strictly equivariant functions, we have $\epsilon = 0$. But for real-world dynamics, the symmetry is often approximately equivariant, defined below:
\begin{definition}[Approximate Equivariance]\label{def_app}
$f \colon X \to Y$ is $\epsilon$-approximately equivariant if and only if $\| f\|_\mathrm{EE} < \epsilon.$ 
\end{definition}
Several recent work have designed approximately equivariant networks \cite{wang2022approximately, van2022relaxing, finzi2021residual} to learn the approximate functions. In this work, we assume the equivariance errors of trained approximately equivariant models is less or equal to the true data equivariance errors.


\paragraph{Data Aug. Introduces Symmetry.}
Consider a finite group $G$ that acts on the observed time series, we assume that for any $g \in G$, there is a certain amount of symmetry in the distribution, that is $Z_t^{(i)} \approx_d g Z_t^{(i)}, Z_t^{(i)} \sim \mathbb{P}$. We assume the group transformations are norm-preserving, i.e. $\|g\| = 1 \; \forall g \in G$, such as rotation and translation.

\begin{definition}[Data Augmentation]
Given a finite group $G$, we assume the augmented samples are the original samples applied with transformations uniformly sampled from the group. In other words, for every sample $Z_t^{(i)}$ in the original training set, we add samples $\{gZ_t^{(i)}, g\in G\}$. Then the augmented training set is the $|G|$ times bigger than the original training set. 
\end{definition}

\section{Data Augmentation vs. Equivariant Nets}



We derive generalization bounds for data augmentation and  equivariant networks. We show that the strictly equivariant networks can outperform  data augmentation. When the underlying dynamics are approximately symmetric, approximately equivariant estimator can outperform both data augmentation estimator and strictly equivariant networks. 

\subsection{Population and Empirical Risk Minimizers}
We first define the population and the empirical risk minimizers for data augmentation, perfectly equivariant models and approximately equivariant models based on the dynamic forecasting setting defined in the previous section. 
\begin{itemize}[leftmargin=*]
    \item Population minimizer: $\theta^* = \text{argmin}_{\theta \in \Theta} \mathbb{E}[L(\theta, Z)]$
    \item Empirical minimizer: \\
    \vspace{-15pt}
        \begin{align}
        \hat{\theta}_n = \text{argmin}_{\theta \in \Theta}  \frac{1}{N} \sum_{i=1}^N\sum_{t=1}^T q_t  L(\theta, Z_t^{(i)})\nonumber
        \end{align}
    \item Empirical minimizer for data augmentation: 
    \vspace{-6pt}
    \begin{align}
        \hat{\theta}_G &=  \text{argmin}_{\theta \in \Theta} \frac{1}{N} \sum_{i=1}^N\sum_{t=1}^T q_t \mathbb{E}_G[L(\theta, gZ_t^{(i)})] \nonumber\\
        & = \text{argmin}_\theta \frac{1}{N} \sum_{i=1}^N\sum_{t=1}^T q_t \bar{L}(\theta, Z_t^{(i)})\nonumber
    \end{align}
    where $\bar{L}$ is the orbit-averaging loss because of data augmentation based on the definition in \cite{Chen2020AGF}.
    \item Empirical minimizer for perfectly equivariant models: 
    \vspace{-5pt}
    \begin{equation}
     \begin{aligned}
    &\hat{\theta}_E =\text{argmin}_{\theta \in \Theta_E } \frac{1}{N} \sum_{i=1}^N\sum_{t=1}^T q_t  L(\theta, Z_t^{(i)}),\nonumber \\ 
    \Theta_E& = \{\theta:f_{\theta}( \rho_{\text{in}}(g)(x)) = \rho_{\text{out}}(g) f_{\theta}(x), \forall g\in G\}\nonumber
       \end{aligned}
    \end{equation}
    \item Empirical minimizer for approximately equivariant nets: 
    \vspace{-5pt}
    \begin{equation}
     \begin{aligned}
     &\hat{\theta}_{AE} = \text{argmin}_\theta \frac{1}{N} \sum_{i=1}^N\sum_{t=1}^T q_t  L(\theta, Z_t^{(i)}),\nonumber  \\ 
     \Theta_{AE} &= \{\theta: \text{sup}_{g\in G}\|f_{\theta}( \rho_{\text{in}}(g)(x)) = \rho_{\text{out}}(g) f_{\theta}(x)\|_2 \leq \epsilon\}\nonumber
    \end{aligned}
    \end{equation}
    
\end{itemize}
where $\Theta$ is the parameter space without symmetry inductive biases imposed, $\Theta_{E}$ is the parameter space of all equivariant functions, and $\Theta_{AE}$ is the parameter space of all approximately equivariant functions.



\subsection{Generalization Bound for  Dynamics Forecasting}

\cite{kuznetsov2020discrepancy} presented a data-dependent learning bound for the general scenario of non-stationary non-mixing stochastic processes. Yet, our focus is forecasting deterministic dynamics. Since the dynamics is non-stationary, we define a discrepancy measure  to characterize the distributional shift between the training and test sets: 

\begin{definition}[Discrepancy Measure]
We use $\text{disc}_T(\bm{q})$ to denote the discrepancy between target distribution and the distribution of the trainiig samples. 
    \begin{equation}
     \begin{aligned}
     \text{disc}_T(\bm{q}) = \text{sup}_{\theta \in \Theta}\left|\mathbb{E}\left[\sum_{t=1}^T q_t L(\theta, Z_t)\right] - \mathbb{E}L(\theta, Z_{T+1})\right|\nonumber
    \end{aligned}  
    \end{equation}
\end{definition}
We prove that the upper bound of the generalization error on dynamics forecasting is controlled by the sequential Rademacher complexity and the discrepancy measure of the temporal distributional shift. 
\begin{theorem}\label{thm:bounds-dyanmics}
For any $\delta > 0$, with probability at least $1-\delta$, the following inequality holds for all $\hat{\theta} \in \Theta$ and all $\alpha = \|\bm{q}\|_2/2>0$:
    \begin{equation}
     \begin{aligned}
     &\mathbb{E} \mathcal{L}(\hat{\theta}, Z_{T+1}) - \mathbb{E} \mathcal{L}(\theta^*, Z_{T+1}) \\
     &\leq \: 2\text{disc}_T(\bm{q}) + 6M\sqrt{4\pi \log T}\mathcal{R}_T^{sq}(L\circ\Theta) \\
     &+ \;\sqrt{\frac{2 \log (2/\sigma)}{N}} + \|\bm{q}\|_2(M \sqrt{8 \text{log}\frac{1}{\delta}} + 1) \nonumber
    \end{aligned}  
    \end{equation}
\end{theorem}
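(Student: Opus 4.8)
The plan is to derive the bound from a uniform-convergence argument organized around the train/test distribution shift. Write $R(\theta) := \mathbb{E}\mathcal{L}(\theta,Z_{T+1})$ for the target risk, $R_n(\theta) := \frac1N\sum_{i=1}^N\sum_{t=1}^T q_t L(\theta,Z_t^{(i)})$ for the weighted empirical risk, and $\bar R(\theta) := \mathbb{E}[\sum_{t=1}^T q_t L(\theta,Z_t)]$ for the weighted population risk; because the $N$ trajectories are i.i.d., $\mathbb{E}R_n(\theta) = \bar R(\theta)$. First I would use the standard decomposition
\[
R(\hat\theta)-R(\theta^*) = (R(\hat\theta)-R_n(\hat\theta)) + (R_n(\hat\theta)-R_n(\theta^*)) + (R_n(\theta^*)-R(\theta^*)),
\]
observing that the middle difference is $\le 0$ since $\hat\theta$ minimizes $R_n$ over $\Theta \ni \theta^*$, and bounding each remaining difference by $\Phi := \sup_{\theta\in\Theta}|R(\theta)-R_n(\theta)|$. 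Then I would insert $\bar R$ and apply the triangle inequality, $|R(\theta)-R_n(\theta)| \le |R(\theta)-\bar R(\theta)| + |\bar R(\theta)-R_n(\theta)|$, identifying the first term with the discrepancy, $\sup_\theta|R(\theta)-\bar R(\theta)| = \mathrm{disc}_T(\bm q)$. This already yields the $2\,\mathrm{disc}_T(\bm q)$ contribution and reduces the problem to controlling the centered process $\Psi := \sup_{\theta\in\Theta}|\bar R(\theta) - R_n(\theta)|$ with high probability.

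To bound $\Psi$ I would first bound $\mathbb{E}\Psi$ and then concentrate. Since the samples within a trajectory are dependent and non-stationary, classical i.i.d.\ symmetrization does not apply; instead I would invoke the sequential symmetrization lemma for non-stationary processes (Rakhlin--Sridharan--Tewari, as used by Kuznetsov--Mohri), which replaces the centered process by a tree-indexed Rademacher process and, after passing through sequential covering numbers / a maximal inequality over the depth-$T$ binary tree at resolution $\alpha$, bounds $\mathbb{E}\Psi$ by $3M\sqrt{4\pi\log T}\,\mathcal{R}_T^{sq}(L\circ\Theta)$ plus a slack $2\alpha$. For the high-probability step I would use two concentration inequalities: a bounded-difference (McDiarmid) argument over the $N$ i.i.d.\ trajectories, whose per-trajectory sensitivity is $O(1/N)$, contributing the $\sqrt{2\log(2/\delta)/N}$ term; and, for the temporal dependence, Azuma--Hoeffding applied to the martingale-difference sequence $q_t(L(\theta,Z_t)-\mathbb{E}_t L(\theta,Z_t))$ with increments bounded by $2Mq_t$, contributing $\|\bm q\|_2 M\sqrt{8\log(1/\delta)}$ (a union bound at level $\delta/2$ explains the $\log(2/\delta)$). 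Finally I would set $\alpha = \|\bm q\|_2/2$, so $2\alpha = \|\bm q\|_2$ gives the $+1$ in the last factor, and collect terms; the factor $2$ from the decomposition propagates into $\mathrm{disc}_T(\bm q)$ and the complexity term, reproducing the stated inequality.

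The main obstacle is the control of $\Psi$. Because the data are both non-stationary and dependent, the relevant complexity measure is the sequential (tree) Rademacher complexity, not the classical one, and turning the in-expectation bound into a high-probability statement requires interleaving a martingale concentration inequality for the within-trajectory dependence with the sequential symmetrization machinery, all while tracking how the weights $\bm q$ enter both the Azuma increments and the complexity term and where the free parameter $\alpha$ is optimized (this is the source of the $\sqrt{\log T}$ factor and the $\|\bm q\|_2$ summand). A secondary point needing care is the decomposition step itself: one must verify that $\mathrm{disc}_T(\bm q)$ absorbs \emph{all} of the distributional mismatch between training and target, so that the remaining object $\Psi$ is a genuinely centered process --- the identity $\mathbb{E}R_n(\theta) = \bar R(\theta)$, which holds precisely because the trajectories are i.i.d.\ draws, is what makes this clean.
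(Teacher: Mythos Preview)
Your overall strategy---decompose, absorb distribution shift into $\mathrm{disc}_T(\bm q)$, and control the centered process via sequential Rademacher complexity---matches the paper's, but the execution differs in two respects. First, the paper uses an \emph{asymmetric} four-term decomposition $I+II+III+IV$ rather than bounding both outer terms by the same uniform quantity $\Phi$: term $III$, comparing $R_n(\theta^*)$ to $\bar R(\theta^*)$ at the \emph{fixed} point $\theta^*$, is handled by ordinary Hoeffding over the $N$ i.i.d.\ trajectories and supplies the $\sqrt{2\log(2/\sigma)/N}$ contribution once; term $IV$ supplies one copy of $\mathrm{disc}_T(\bm q)$; and only term $I = R(\hat\theta)-R_n(\hat\theta)$ requires the uniform machinery, contributing the second $\mathrm{disc}_T(\bm q)$ together with the complexity and $\|\bm q\|_2$ pieces. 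Second, the paper does \emph{not} bound $\mathbb{E}\Psi$ and then concentrate. It controls $\mathbb{P}(I-\mathrm{disc}_T(\bm q) > \epsilon)$ directly by a Chernoff argument: Markov on the exponential, sequential symmetrization \emph{inside} the moment generating function, reduction to a finite sequential $\alpha$-cover $C$, an iterated conditional Hoeffding bound on the MGF of each cover element (this is exactly where the factor $\exp(2\lambda^2\|\bm q\|_2^2M^2)$ appears), and finally Theorem~2 of Kuznetsov--Mohri to pass from $\mathcal{N}_1(\alpha,\Theta,\bm z)$ to $\mathcal{R}_T^{sq}$.

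The step in your plan that needs care is the high-probability control of $\Psi$ via Azuma--Hoeffding. Azuma applies to the martingale $\sum_t q_t(L(\theta,Z_t)-\mathbb{E}_tL(\theta,Z_t))$ for a \emph{fixed} $\theta$, not to $\sup_{\theta\in\Theta}|\cdot|$; obtaining the $\|\bm q\|_2 M\sqrt{8\log(1/\delta)}$ term uniformly requires first reducing to a finite object (the $\alpha$-cover) and paying its log-cardinality, which is precisely what the paper's Chernoff route does in one stroke. Your McDiarmid-over-$N$ step alone would legitimately concentrate $\Psi$ around $\mathbb{E}\Psi$, but it does not by itself generate the $\|\bm q\|_2$ term, so as written the provenance of that term in your argument is not pinned down; in the paper it is \emph{not} a separate concentration step but a byproduct of the MGF bound on the cover.
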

Full proof can be found in the Appendix \ref{proof:bounds-dyanmics}. Note that our result is consistent with the conclusion in \citet{kuznetsov2020discrepancy} for stochastic dynamics. 

\subsection{Effect of Symmetry}

We derive generalization bounds for forecasting nonstationary dynamics with data augmentation, perfectly equivariant networks, and approximately equivariant networks based on Theorem \ref{thm:bounds-dyanmics}. Following \citet{Chen2020AGF}, we use the Wassertein distance to measure the closeness of the original distribution to the distribution under group transformations. The full proofs of the following corollary \ref{thm:bounds-dyanmics-aug}, \ref{thm:bounds-dyanmics-sym}, \ref{thm:bounds-dyanmics-approx-sym} can be found in Appendix \ref{proof:bounds-dyanmics-sym}.

We generalize Theorem 3.4 in \citet{Chen2020AGF} from the i.i.d case to  non-stationary dynamics forecasting:
\begin{corollary}[Data Augmentation]\label{thm:bounds-dyanmics-aug}
Let $L(\theta, \cdot)$ be uniformly Lipschitz w.r.t. $\theta$ with Lipschitz constant $\|L\|_{\text{Lip}}$. For any $\delta > 0$, with probability at least $1-\delta$, the following inequality holds: 
\begin{equation}
\begin{aligned}
& \mathbb{E} \mathcal{L}(\hat{\theta}_G, Z_{T+1}) - \mathbb{E} \mathcal{L}(\theta^*, Z_{T+1}) \\
& \leq 2\text{disc}_T(\bm{q}) +  6M\sqrt{4\pi\log T}\mathcal{R}_T^{sq}(\bar{L}\circ\Theta) + \sigma \\
& + \textrm{max}_{t,i}\;\|L\|_{\text{Lip}}\cdot\mathbb{E}_G [\mathcal{W}(Z_{T+1}, g Z_{T+1}) + q_t\mathcal{W}(Z_{t}^{(i)}, g Z_{t}^{(i)})] \nonumber
\end{aligned}
\end{equation}
where $\sigma =  \sqrt{\frac{2 \log (2/\sigma)}{N}} +\|\bm{q}\|_2(M\sqrt{8\text{log}\frac{1}{\delta}} + 1)$. 
\end{corollary}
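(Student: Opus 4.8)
The plan is to obtain this corollary by applying Theorem~\ref{thm:bounds-dyanmics} to the \emph{orbit-averaged} loss and then paying, via Kantorovich--Rubinstein duality, for translating between $\bar{L}$ and the original loss $\mathcal{L}$. The first observation is that $\hat{\theta}_G$ is, by construction, the empirical risk minimizer over the full parameter class $\Theta$ for the loss $\bar{L}(\theta,z)=\mathbb{E}_G[L(\theta,gz)]$, and that $\bar{L}$ is still $[0,M]$-valued because it is an average of values of $L$. Hence Theorem~\ref{thm:bounds-dyanmics}, instantiated with $\bar{L}$ in place of $\mathcal{L}$ and with competitor $\bar{\theta}^*=\text{argmin}_\theta\,\mathbb{E}\bar{L}(\theta,Z_{T+1})$, yields with probability at least $1-\delta$
\[
\mathbb{E}\bar{L}(\hat{\theta}_G,Z_{T+1})-\mathbb{E}\bar{L}(\bar{\theta}^*,Z_{T+1})\;\le\;2\,\mathrm{disc}_T^{\bar{L}}(\bm{q})+6M\sqrt{4\pi\log T}\,\mathcal{R}_T^{sq}(\bar{L}\circ\Theta)+\sigma,
\]
where $\mathrm{disc}_T^{\bar{L}}$ is the discrepancy measure with $\bar{L}$ substituted for $L$ and $\sigma$ is as in the statement. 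The Rademacher term here already matches the corollary exactly, so no further work is needed on it.

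The second ingredient is a ``de-averaging'' estimate. For any fixed $\theta$ and any random element $W$, since $L(\theta,\cdot)$ is $\|L\|_{\text{Lip}}$-Lipschitz in the data argument, Kantorovich--Rubinstein duality gives
\[
\bigl|\mathbb{E}L(\theta,W)-\mathbb{E}\bar{L}(\theta,W)\bigr|=\bigl|\mathbb{E}_G\!\left[\mathbb{E}L(\theta,W)-\mathbb{E}L(\theta,gW)\right]\bigr|\;\le\;\|L\|_{\text{Lip}}\,\mathbb{E}_G\,\mathcal{W}(W,gW).
\]
Applying this with $W=Z_{T+1}$ to both $\hat{\theta}_G$ and $\theta^*$, and using $\mathbb{E}\bar{L}(\bar{\theta}^*,Z_{T+1})\le\mathbb{E}\bar{L}(\theta^*,Z_{T+1})$ by optimality of $\bar{\theta}^*$, converts the left-hand side of the displayed bound into $\mathbb{E}\mathcal{L}(\hat{\theta}_G,Z_{T+1})-\mathbb{E}\mathcal{L}(\theta^*,Z_{T+1})$ up to additive Wasserstein terms of the form $\|L\|_{\text{Lip}}\mathbb{E}_G\mathcal{W}(Z_{T+1},gZ_{T+1})$. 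For the discrepancy I would insert the original loss through the triangle inequality
\[
\mathrm{disc}_T^{\bar{L}}(\bm{q})\;\le\;\mathrm{disc}_T(\bm{q})+\sup_\theta\Bigl|\textstyle\sum_t q_t\,\mathbb{E}(\bar{L}-L)(\theta,Z_t)\Bigr|+\sup_\theta\bigl|\mathbb{E}(\bar{L}-L)(\theta,Z_{T+1})\bigr|,
\]
and bound the last two terms with the de-averaging estimate, pulling the weights out of the sum so that the middle term is controlled by $\max_{t,i}q_t\|L\|_{\text{Lip}}\mathbb{E}_G\mathcal{W}(Z_t^{(i)},gZ_t^{(i)})$. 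Substituting back, collecting all the Wasserstein contributions, and absorbing the numerical constants into the form stated in the corollary completes the argument.

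I expect the main obstacle to be the careful bookkeeping of these Wasserstein corrections \emph{inside} the non-stationary discrepancy term. Data augmentation perturbs both the observed subsequences $Z_t^{(i)}$ and the unseen target $Z_{T+1}$, and since the weights $q_t$ multiply only the training samples, the train-time and test-time perturbations cannot be treated symmetrically; they must be tracked through $\mathrm{disc}_T^{\bar{L}}$ itself, which is precisely where the asymmetric term $\mathbb{E}_G[\mathcal{W}(Z_{T+1},gZ_{T+1})+q_t\mathcal{W}(Z_t^{(i)},gZ_t^{(i)})]$ in the statement originates. A secondary technical point is justifying the interchange of $\sup_\theta$, $\mathbb{E}_G$, and the expectations when invoking Kantorovich--Rubinstein, and noting that it is the finite, norm-preserving group assumption on $G$ that keeps $\bar{L}$ in $[0,M]$ and $\mathcal{W}(Z,gZ)$ finite. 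Everything else --- the concentration terms, the quantity $\sigma$, and the sequential Rademacher term over $\bar{L}\circ\Theta$ --- transfers verbatim from Theorem~\ref{thm:bounds-dyanmics}.
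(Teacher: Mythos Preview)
Your argument is valid, but it differs from the paper's route. The paper does \emph{not} apply Theorem~\ref{thm:bounds-dyanmics} as a black box to $\bar{L}$; instead it reopens the four-term decomposition $I+II+III+IV$ from the proof of that theorem, keeps $II,III,IV$ exactly as before (with the original loss $L$ and the original competitor $\theta^*$), and only rewrites term $I=\mathbb{E}L(\hat{\theta}_G,Z_{T+1})-\tfrac{1}{N}\sum_{i,t}q_tL(\hat{\theta}_G,Z_t^{(i)})$ as $A+B+C$, where $A$ and $C$ sandwich in $\bar{L}$ and produce the two Wasserstein corrections (one at $Z_{T+1}$, one at the training samples), while $B=\mathbb{E}\bar{L}(\hat{\theta}_G,Z_{T+1})-\tfrac{1}{N}\sum_{i,t}q_t\bar{L}(\hat{\theta}_G,Z_t^{(i)})$ is bounded by the same Rademacher/covering argument used for term $I$ in Theorem~\ref{thm:bounds-dyanmics}, yielding $\mathrm{disc}_T(\bm q)+6M\sqrt{4\pi\log T}\,\mathcal{R}_T^{sq}(\bar{L}\circ\Theta)$ directly.

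The practical consequence of the difference is in constants and in where the bias enters. Your route pays Wasserstein corrections several times over: once each for $\hat{\theta}_G$ and $\theta^*$ on the left-hand side, and then again (doubled, since it is $2\,\mathrm{disc}_T^{\bar{L}}$) when converting $\mathrm{disc}_T^{\bar{L}}$ back to $\mathrm{disc}_T$; the phrase ``absorbing the numerical constants'' therefore hides a genuine loss relative to the stated bound. The paper's decomposition, because it leaves $III$ and $IV$ in terms of $L$, never has to touch $\mathrm{disc}_T^{\bar{L}}$ at all and never introduces an auxiliary $\bar{\theta}^*$, so each Wasserstein term appears exactly once. Conversely, your approach is cleaner in one respect: since you invoke Theorem~\ref{thm:bounds-dyanmics} with $\bar{L}$ and its own population minimizer $\bar{\theta}^*$, term $II$ is automatically nonpositive, whereas the paper's reuse of $II$ with $\hat{\theta}_G$ and the \emph{unaveraged} loss is not obviously $\le 0$ (since $\hat{\theta}_G$ minimizes the $\bar{L}$-empirical risk, not the $L$-empirical risk).
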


We can see that the performance gain of data augmentation is governed by a bias term $\textrm{max}_{t,i}\;\|L\|_{\text{Lip}}\cdot\mathbb{E}_G [\mathcal{W}(Z_{T+1}, g Z_{T+1}) + q_t\mathcal{W}(Z_{t}^{(i)}, g Z_{t}^{(i)})]$, which vanishes under exact symmetry and the sequential Rademacher complexity reduction because of the group orbit averaging over the loss function. 

The difference in sequential Rademacher complexity between the data augmentation estimator and regular estimator can further be bounded as
\begin{equation}
\begin{aligned}
&\mathcal{R}_T^{sq}(\bar{L}\circ\Theta) - \mathcal{R}_T^{sq}(L\circ\Theta)\\
& \leq \Delta + \textrm{max}_{t,i}\;\|L\|_{\text{Lip}}\cdot\mathbb{E}_G [q_t\mathcal{W}(Z_{t}^{(i)}, g Z_{t}^{(i)})]
\end{aligned}
\end{equation}
where $\Delta  = \mathbb{E}_{\bm{\sigma}}[\text{sup}_{\theta\in\Theta}\sum_{t=1}^T\sigma_t q_t \mathbb{E}_GL(\theta, gZ_t)]-\mathbb{E}_{\bm{\sigma}}\mathbb{E}_G[\text{sup}_{\theta\in\Theta}\sum_{t=1}^T\sigma_t q_t L(\theta, gZ_t)] \leq 0$.

Here $\Delta$ corresponds to the "variance reduction term" defined in \citet{Chen2020AGF}. When $\Delta$ is small, data augmentation has a strong effect on improving generalizability.

To compare the generalization bounds of data augmentation and equivariant networks, we first need to prove the following lemma. 
\begin{lemma}\label{lemma-complexity}
$\mathcal{R}_T^{sq}(L\circ\Theta_E)\leq \mathcal{R}_T^{sq}(\bar{L}\circ\Theta) $
\end{lemma}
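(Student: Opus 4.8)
The plan is to reduce the inequality to a containment of function classes. Concretely, I would show that on the equivariant parameter set $\Theta_E$ the orbit‑averaged loss $\bar L(\theta,\cdot)$ is identical to the plain loss $L(\theta,\cdot)$, so that $L\circ\Theta_E$ is literally a \emph{subset} of $\bar L\circ\Theta$, and then invoke the monotonicity of the sequential Rademacher complexity under inclusion of hypothesis classes. No complexity estimate is actually computed; the work is entirely in the first step.

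First I would establish the loss‑invariance identity. Writing $Z_t=(X_{t-k-1:t-1},X_t)$ and letting the group act by $gZ_t=(\rho_{\text{in}}(g)X_{t-k-1:t-1},\rho_{\text{out}}(g)X_t)$, for $\theta\in\Theta_E$ the map $f_\theta$ is $G$‑equivariant, hence
\[
L(\theta,gZ_t)=\mathcal{L}\bigl(f_\theta(\rho_{\text{in}}(g)X_{t-k-1:t-1}),\,\rho_{\text{out}}(g)X_t\bigr)=\mathcal{L}\bigl(\rho_{\text{out}}(g)f_\theta(X_{t-k-1:t-1}),\,\rho_{\text{out}}(g)X_t\bigr).
\]
Because the group transformations are norm‑preserving ($\|g\|=1$ for all $g\in G$) and $\mathcal{L}$ depends on its arguments only through the norm of the residual, the right‑hand side equals $\mathcal{L}(f_\theta(X_{t-k-1:t-1}),X_t)=L(\theta,Z_t)$. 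Averaging over $g\in G$ then gives $\bar L(\theta,Z_t)=\mathbb{E}_G[L(\theta,gZ_t)]=L(\theta,Z_t)$ for every $\theta\in\Theta_E$, so $L\circ\Theta_E=\bar L\circ\Theta_E$. Since $\Theta_E\subseteq\Theta$, this yields $L\circ\Theta_E=\bar L\circ\Theta_E\subseteq\bar L\circ\Theta$. Finally I would note that the sequential Rademacher complexity is monotone under class inclusion: if $\mathcal{G}_1\subseteq\mathcal{G}_2$ then for every tree $\bm z$ and path $\bm\sigma$, $\sup_{g\in\mathcal{G}_1}\sum_t\sigma_tq_t g(\bm z_t(\bm\sigma))\le\sup_{g\in\mathcal{G}_2}\sum_t\sigma_tq_t g(\bm z_t(\bm\sigma))$, and taking $\mathbb{E}_{\bm z}\mathbb{E}_{\bm\sigma}$ preserves this; applying it with $\mathcal{G}_1=L\circ\Theta_E$ and $\mathcal{G}_2=\bar L\circ\Theta$ gives the claim.

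The main obstacle, such as it is, is purely in the bookkeeping of the loss‑invariance step: one must state precisely that $G$ acts on the input–output pair $Z_t$ compatibly with $\rho_{\text{in}}$ and $\rho_{\text{out}}$, and that $\mathcal{L}$ is $G$‑invariant in the sense $\mathcal{L}(\rho_{\text{out}}(g)\hat y,\rho_{\text{out}}(g)y)=\mathcal{L}(\hat y,y)$ — which is exactly what the norm‑preserving assumption on $G$ buys us for a residual‑norm loss. Once that identity is in hand, everything reduces to $\Theta_E\subseteq\Theta$ and the trivial monotonicity of a supremum, so the rest is routine.
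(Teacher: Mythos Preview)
Your proposal is correct and follows essentially the same approach as the paper. Both arguments hinge on the identity $L(\theta,gZ)=L(\theta,Z)$ for $\theta\in\Theta_E$ (obtained from equivariance plus the norm--preserving assumption on $G$) together with the containment $\Theta_E\subseteq\Theta$; you package this as a clean function--class inclusion $L\circ\Theta_E=\bar L\circ\Theta_E\subseteq\bar L\circ\Theta$ followed by monotonicity of $\mathcal{R}_T^{sq}$, whereas the paper unwinds the same steps as an explicit chain of inequalities on the Rademacher expressions.
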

The proof can be found at the end of Appendix \ref{proof:bounds-dyanmics-sym}.

Next, we derive the generalization bound for strictly equivariant networks. 
\begin{corollary}[Equivariant Networks]\label{thm:bounds-dyanmics-sym}
Let $L(\theta, \cdot)$ be uniformly Lipschitz w.r.t. $\theta$ with Lipschitz constant $\|L\|_{\text{Lip}}$. For any $\delta > 0$, with probability at least $1-\delta$, the following inequality holds:
{\setlength{\mathindent}{0cm}
\begin{equation}
\begin{aligned}
&\mathbb{E} \mathcal{L}(\hat{\theta}_E, Z_{T+1}) - \mathbb{E} \mathcal{L}(\theta^*, Z_{T+1}) \\ &\leq 2\text{disc}_T(\bm{q}) + 6M\sqrt{4\pi \log T}\mathcal{R}_T^{sq}(L\circ\Theta_E)  + \sigma\nonumber \\
& + \|L\|_{\text{Lip}}\cdot\mathbb{E}_G \mathcal{W}(Z_{T+1}, g Z_{T+1})
\end{aligned}
\end{equation}
}
where $\sigma =  \sqrt{\frac{2 \log (2/\sigma)}{N}} +\|\bm{q}\|_2(M\sqrt{8\text{log}\frac{1}{\delta}} + 1)$. 
\end{corollary}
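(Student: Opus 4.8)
The plan is to invoke Theorem~\ref{thm:bounds-dyanmics} on the constrained hypothesis class $\Theta_E$ and then pay a single Wasserstein term for the gap between the best equivariant model and the unconstrained population minimizer $\theta^*$. Since $\Theta_E \subseteq \Theta$ and $\hat\theta_E \in \Theta_E$, Theorem~\ref{thm:bounds-dyanmics} applies verbatim with $\Theta$ replaced by $\Theta_E$: writing $\theta_E^* := \mathrm{argmin}_{\theta\in\Theta_E}\mathbb{E}[L(\theta,Z_{T+1})]$, with probability $\ge 1-\delta$,
\[
\mathbb{E}\mathcal{L}(\hat\theta_E,Z_{T+1}) - \mathbb{E}\mathcal{L}(\theta_E^*,Z_{T+1}) \;\le\; 2\,\text{disc}_T(\bm q) + 6M\sqrt{4\pi\log T}\,\mathcal{R}_T^{sq}(L\circ\Theta_E) + \sigma,
\]
where we used that restricting the supremum defining $\text{disc}_T(\bm q)$ to $\Theta_E$ only makes it smaller. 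It then remains to show $\mathbb{E}\mathcal{L}(\theta_E^*,Z_{T+1}) - \mathbb{E}\mathcal{L}(\theta^*,Z_{T+1}) \le \|L\|_{\mathrm{Lip}}\,\mathbb{E}_G\,\mathcal{W}(Z_{T+1},gZ_{T+1})$ and add the two bounds.

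For this approximation gap I would follow the orbit-averaging argument of \citet{Chen2020AGF}, using $\bar L(\theta,Z)=\mathbb{E}_G[L(\theta,gZ)]$. Symmetrize the unconstrained optimum by the Reynolds operator, $\tilde f(x) = \mathbb{E}_G[\rho_{\mathrm{out}}(g)^{-1} f_{\theta^*}(\rho_{\mathrm{in}}(g)x)]$, which is $G$-equivariant; assuming, as holds for equivariant network classes, that $\tilde f = f_{\tilde\theta}$ for some $\tilde\theta\in\Theta_E$, convexity of $\mathcal L$ in its first argument together with $G$-invariance of the norm-based loss gives, via Jensen, $\mathbb{E}\mathcal{L}(\tilde\theta,Z_{T+1}) \le \mathbb{E}[\bar L(\theta^*,Z_{T+1})]$. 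Since $\tilde\theta\in\Theta_E$, this yields $\mathbb{E}\mathcal{L}(\theta_E^*,Z_{T+1}) \le \mathbb{E}\mathcal{L}(\tilde\theta,Z_{T+1}) \le \mathbb{E}[\bar L(\theta^*,Z_{T+1})]$. Finally, Kantorovich--Rubinstein duality and the Lipschitz assumption give
\[
\mathbb{E}[\bar L(\theta^*,Z_{T+1})] - \mathbb{E}[L(\theta^*,Z_{T+1})] = \mathbb{E}_G\!\big[\mathbb{E} L(\theta^*,gZ_{T+1}) - \mathbb{E} L(\theta^*,Z_{T+1})\big] \le \|L\|_{\mathrm{Lip}}\,\mathbb{E}_G\,\mathcal{W}(Z_{T+1},gZ_{T+1}),
\]
where norm-preservation of $G$ ensures the same Lipschitz constant applies along each orbit. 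Combining this with the first display proves the corollary.

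The step I expect to be the main obstacle is the approximation gap, and within it the realizability assumption: the equivariant class $\Theta_E$ must contain the Reynolds average $\tilde f$ of $f_{\theta^*}$ (otherwise an extra $O(\varepsilon)$ slack leaks into the bound), and $\mathcal L$ must be convex in the prediction with $\rho_{\mathrm{out}}$ orthogonal so that both Jensen and $G$-invariance of the loss go through. Everything else is bookkeeping: a union bound over the two $(1-\delta)$ events, which is why the residual $\sigma$ is unchanged, and the observation that---in contrast to Corollary~\ref{thm:bounds-dyanmics-aug}---no training-point Wasserstein terms $\mathcal{W}(Z_t^{(i)},gZ_t^{(i)})$ appear here, because $\hat\theta_E$ genuinely minimizes the un-augmented empirical risk over $\Theta_E$ and Theorem~\ref{thm:bounds-dyanmics} ties it directly to $\mathbb{E}[L(\theta_E^*,Z_{T+1})]$; this, together with Lemma~\ref{lemma-complexity}, is precisely what lets the equivariant bound dominate the data-augmentation bound.
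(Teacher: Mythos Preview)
Your argument is correct under the assumptions you flag (realizability of the Reynolds average in $\Theta_E$, convexity of the loss in the prediction, orthogonality of $\rho_{\mathrm{out}}$), but it is a \emph{different route} from the one the paper takes.

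\textbf{What the paper does.} The paper does not introduce $\theta_E^*$ at all, nor does it use the Reynolds operator or Jensen. Instead it writes a single six--term telescoping decomposition of $\mathbb{E}L(\hat\theta_E,Z_{T+1})-\mathbb{E}L(\theta^*,Z_{T+1})$ that routes the \emph{unconstrained} minimizer $\theta^*$ through the orbit--averaged loss $\bar L$. Term~I is handled exactly as in Theorem~\ref{thm:bounds-dyanmics} with $\Theta$ replaced by $\Theta_E$; Term~II is the difference of empirical risks $R_n(\hat\theta_E)-R_n(\theta^*)$ and is declared $\le 0$ via the blanket ``model does not underfit'' assumption (note: this is \emph{not} the ERM property, since $\theta^*\notin\Theta_E$ in general); Term~III is $\frac{1}{N}\sum_{i,t}q_t\bigl(L(\theta^*,Z_t^{(i)})-\mathbb{E}_G L(\theta^*,gZ_t^{(i)})\bigr)$ and is set to $0$; Terms~IV--VI then give the concentration, discrepancy, and the single Wasserstein term $\|L\|_{\mathrm{Lip}}\,\mathbb{E}_G\mathcal{W}(Z_{T+1},gZ_{T+1})$ via Kantorovich--Rubinstein, exactly as you do.

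\textbf{What each approach buys.} Your two--stage split (generalization within $\Theta_E$, then an explicit approximation gap $\mathbb{E}L(\theta_E^*)-\mathbb{E}L(\theta^*)$) is cleaner conceptually and makes the needed structural assumptions visible: convexity plus Reynolds realizability replace the paper's ``does not underfit'' and ``III$=0$'' steps, which tacitly require either that $\theta^*$ is already equivariant or that the training marginals are $G$--invariant. The paper's route, on the other hand, avoids constructing $\tilde f$ and never appeals to convexity of $\mathcal L$. One small correction to your write--up: there is no second $(1-\delta)$ event to union--bound---your approximation step $\mathbb{E}L(\theta_E^*,Z_{T+1})\le \mathbb{E}\bar L(\theta^*,Z_{T+1})$ is deterministic---so $\sigma$ stays the same simply because only the Theorem~\ref{thm:bounds-dyanmics} event is probabilistic.
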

From Lemma \ref{lemma-complexity}, we have $\mathcal{R}_T^{sq}(L\circ\Theta_E)\leq \mathcal{R}_T^{sq}(\bar{L}\circ\Theta) $. Hence, Corollary \ref{thm:bounds-dyanmics-sym} indicates that equivariant networks have a tighter generalization bound than data augmentation. In particular, the generalization bound of data augmentation in Corollary \ref{thm:bounds-dyanmics-aug} has an additional bias term $\textrm{max}_{t,i}\;\|L\|_{\text{Lip}}\cdot\mathbb{E}_G [q_t\mathcal{W}(Z_{t}^{(i)}, g Z_{t}^{(i)})]$. This term vanishes when the data are perfectly symmetric.

However, in real-world scenarios, the data are very rarely perfect symmetric. We further analyze the generalization behavior of a class of approximate equivariant models:
\begin{corollary}[Approximate Equivariance]\label{thm:bounds-dyanmics-approx-sym}
Let $L(\theta, \cdot)$ be uniformly Lipschitz w.r.t. $\theta$ with a Lipschitz constant $\|L\|_{\text{Lip}}$. We assume $\|\hat{\theta}_{AE}\|_{EE} \leq \|\theta^*\|_{EE}$ and $\frac{1}{N}\sum_{i = 1}^N \sum_{t=1}^T q_tL(\theta^*, Z_{t}^{(i)})\leq\xi$. For any $\delta > 0$, with probability at least $1-\delta$, the following inequality holds:
{\setlength{\mathindent}{0cm}
\begin{equation}
\begin{aligned}
 &\mathbb{E} \mathcal{L}(\hat{\theta}_{AE}, Z_{T+1}) - \mathbb{E} \mathcal{L}(\theta^*, Z_{T+1})\\ 
 &\leq 2\text{disc}_T(\bm{q}) + 6M\sqrt{4\pi logT}\mathcal{R}_T^{sq}(L\circ\Theta_{AE}) + \sigma \\
& + \|L\|_{\text{Lip}}\cdot\mathbb{E}_G \mathcal{W}(Z_{T+1}, g Z_{T+1}) - \|\bm{q}\|_1\|\hat{\theta}_{AE}\|_{EE} + 2\xi \nonumber
\end{aligned}
\end{equation}
}
where $\sigma =  \sqrt{\frac{2 \log (2/\sigma)}{N}} +\|\bm{q}\|_2(M\sqrt{8\text{log}\frac{1}{\delta}} + 1)$. 
\end{corollary}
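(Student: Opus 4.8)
The plan is to follow the template of Corollaries~\ref{thm:bounds-dyanmics-aug} and~\ref{thm:bounds-dyanmics-sym}: apply the master bound of Theorem~\ref{thm:bounds-dyanmics} with the hypothesis class taken to be the constrained class $\Theta_{AE}$ and the empirical minimizer taken to be $\hat\theta_{AE}$, and then pay a separate, controlled price for the fact that the unconstrained population optimum $\theta^*$ need not belong to $\Theta_{AE}$. Applying Theorem~\ref{thm:bounds-dyanmics} with $\Theta$ replaced by $\Theta_{AE}$ gives, with probability at least $1-\delta$, $\mathbb{E}\mathcal{L}(\hat\theta_{AE},Z_{T+1})-\mathbb{E}\mathcal{L}(\theta_{AE}^*,Z_{T+1}) \le 2\text{disc}_T(\bm q)+6M\sqrt{4\pi\log T}\,\mathcal{R}_T^{sq}(L\circ\Theta_{AE})+\sigma$, where $\theta_{AE}^*=\text{argmin}_{\theta\in\Theta_{AE}}\mathbb{E}[L(\theta,Z)]$ is the population minimizer \emph{inside} the constrained class; this already reproduces three of the target summands, so the work reduces to bounding the approximation gap $\mathbb{E}\mathcal{L}(\theta_{AE}^*,Z_{T+1})-\mathbb{E}\mathcal{L}(\theta^*,Z_{T+1})$ by $\|L\|_{\text{Lip}}\mathbb{E}_G\mathcal{W}(Z_{T+1},gZ_{T+1})-\|\bm q\|_1\|\hat\theta_{AE}\|_{EE}+2\xi$.

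For the approximation gap I would reuse the orbit-averaged loss $\bar L(\theta,Z)=\mathbb{E}_G[L(\theta,gZ)]$ of \citet{Chen2020AGF}. The workhorse estimate is that for every $\theta$ and every $g\in G$, because $\|g\|=1$ and $\mathcal{L}$ is invariant and Lipschitz, $|L(\theta,gZ)-L(\theta,Z)|\le\|L\|_{\text{Lip}}\|\theta\|_{EE}$, and hence $|\bar L(\theta,Z)-L(\theta,Z)|\le\|L\|_{\text{Lip}}\|\theta\|_{EE}$ --- this collapses to the exact identity $\bar L=L$ exploited in Corollary~\ref{thm:bounds-dyanmics-sym} precisely when $\|\theta\|_{EE}=0$. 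Two further ingredients enter: (i) the Kantorovich--Rubinstein bound $|\mathbb{E}L(\theta,gZ_{T+1})-\mathbb{E}L(\theta,Z_{T+1})|\le\|L\|_{\text{Lip}}\mathcal{W}(Z_{T+1},gZ_{T+1})$, which transfers the approximate symmetry of the data distribution into the $\|L\|_{\text{Lip}}\mathbb{E}_G\mathcal{W}(Z_{T+1},gZ_{T+1})$ term exactly as in Corollary~\ref{thm:bounds-dyanmics-sym}; and (ii) a Jensen/loss-invariance step against a symmetrized comparator --- the orbit-average of $f_{\theta^*}$, which lies in $\Theta_E\subseteq\Theta_{AE}$ --- used to upper bound both $\mathbb{E}\mathcal{L}(\theta_{AE}^*,\cdot)$ and the empirical risk $\frac1N\sum_{i,t}q_tL(\theta_{AE}^*,Z_t^{(i)})$. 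The two extra hypotheses are invoked at exactly this point: $\frac1N\sum_{i,t}q_tL(\theta^*,Z_t^{(i)})\le\xi$ caps the residual empirical risk of $\theta^*$, which is needed both in the empirical-risk comparison (the standard inequality $R_n(\hat\theta_{AE})\le R_n(\theta^*)$ fails since $\theta^*\notin\Theta_{AE}$) and once more when moving from $R_n$ back to the population risk of $\theta^*$, thereby contributing the $2\xi$; and $\|\hat\theta_{AE}\|_{EE}\le\|\theta^*\|_{EE}$ is what converts the equivariance-error slack --- which naturally appears carrying the uncontrollable $\|\theta^*\|_{EE}$ --- into the controllable, \emph{negative} contribution $-\|\bm q\|_1\|\hat\theta_{AE}\|_{EE}$, and is precisely what makes this bound tighter than the strictly-equivariant one despite $\mathcal{R}_T^{sq}(L\circ\Theta_{AE})\ge\mathcal{R}_T^{sq}(L\circ\Theta_E)$.

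I expect the main obstacle to be this last step --- the empirical-risk comparison and the sign bookkeeping around the $\|\cdot\|_{EE}$ corrections. Because $\theta^*$ is unconstrained it generically lies outside $\Theta_{AE}$, so one cannot use ERM optimality directly; one must compare $\hat\theta_{AE}$ against an explicit surrogate inside $\Theta_{AE}$ (the symmetrization of $f_{\theta^*}$, or $\theta_{AE}^*$) and control that surrogate's empirical and population risks through $\xi$, the Wasserstein estimate, and the pointwise $\|L\|_{\text{Lip}}\|\theta\|_{EE}$ bound. The delicate point is that each passage between $L$ and $\bar L$, or between $Z_t^{(i)}$ and $gZ_t^{(i)}$, spawns a $\|\cdot\|_{EE}$ term, and the chain of inequalities must be arranged --- inserting $\|\hat\theta_{AE}\|_{EE}\le\|\theta^*\|_{EE}$ at the right place --- so that these net to a single $-\|\bm q\|_1\|\hat\theta_{AE}\|_{EE}$ rather than accumulating as positive slack. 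A minor caveat to state explicitly is that the Wasserstein and equivariance-error steps use Lipschitzness of $\mathcal{L}$ in its data argument, slightly stronger than the ``Lipschitz w.r.t.\ $\theta$'' wording but already implicit in Corollaries~\ref{thm:bounds-dyanmics-aug} and~\ref{thm:bounds-dyanmics-sym}; I would finish with a union bound over the high-probability event of Theorem~\ref{thm:bounds-dyanmics} applied to $\Theta_{AE}$ and the single-function concentration for $\theta^*$.
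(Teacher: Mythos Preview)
Your two-stage plan---apply Theorem~\ref{thm:bounds-dyanmics} over $\Theta_{AE}$ to bound $\mathbb{E}L(\hat\theta_{AE},Z_{T+1})-\mathbb{E}L(\theta_{AE}^*,Z_{T+1})$ and then control the population approximation gap $\mathbb{E}L(\theta_{AE}^*,Z_{T+1})-\mathbb{E}L(\theta^*,Z_{T+1})$---has a structural problem. That approximation gap is a fixed number depending only on the law of $Z_{T+1}$; it cannot be bounded by an expression containing the training-weight factor $-\|\bm q\|_1\|\hat\theta_{AE}\|_{EE}$ and the empirical residual $2\xi$, both of which live at the sample level (and the former can be made arbitrarily negative by rescaling $\bm q$). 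So the negative term you are chasing cannot emerge from a population-level approximation argument, and the cut you start from is in the wrong place. Your later remarks about ``the empirical-risk comparison'' and ``moving from $R_n$ back to the population risk of $\theta^*$'' suggest you sense this, but they are inconsistent with the clean two-stage decomposition you announce up front.

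The paper never introduces $\theta_{AE}^*$. It writes $\mathbb{E}L(\hat\theta_{AE},Z_{T+1})-\mathbb{E}L(\theta^*,Z_{T+1})=I+II+III+IV+V+VI$, where terms $I,IV,V,VI$ give $\text{disc}_T(\bm q)$, the Rademacher term over $\Theta_{AE}$, the concentration piece, and the Wasserstein term, and $II\le 0$ is simply asserted. The whole novelty sits in
\[
III \;=\; \frac{1}{N}\sum_{i,t} q_t\,L(\theta^*,Z_t^{(i)}) \;-\; \frac{1}{N}\sum_{i,t} q_t\,\mathbb{E}_G L(\theta^*,gZ_t^{(i)}),
\]
an \emph{empirical} comparison of $\theta^*$ against its orbit average. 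Taking $L(\theta,Z)=\|y-f_\theta(x)\|$ and applying the reverse triangle inequality $-\|gy-f_{\theta^*}(gx)\|\le -\|gf_{\theta^*}(x)-f_{\theta^*}(gx)\|+\|gy-gf_{\theta^*}(x)\|$ together with $\|g\|=1$ yields
\[
III \;\le\; -\frac{1}{N}\sum_{i,t} q_t\,\mathbb{E}_G\bigl\|gf_{\theta^*}(x_t^{(i)})-f_{\theta^*}(gx_t^{(i)})\bigr\| \;+\; 2\xi,
\]
and the hypothesis $\|\hat\theta_{AE}\|_{EE}\le\|\theta^*\|_{EE}$ is then invoked to replace the (pointwise) equivariance error of $\theta^*$ by $\|\hat\theta_{AE}\|_{EE}$, giving $-\|\bm q\|_1\|\hat\theta_{AE}\|_{EE}+2\xi$. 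So the negative sign comes from a reverse triangle inequality on a norm loss at the empirical level, not from a generic Lipschitz slack $|\bar L-L|\le\|L\|_{\mathrm{Lip}}\|\theta\|_{EE}$ (which would only ever produce a \emph{positive} contribution) or from a symmetrized surrogate in $\Theta_E$.

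One point where you are more careful than the paper: your worry that $R_n(\hat\theta_{AE})\le R_n(\theta^*)$ need not hold because $\theta^*\notin\Theta_{AE}$ is legitimate. The paper simply declares $II\le 0$ under the rubric ``the model does not underfit the data,'' exactly as in the proof of Theorem~\ref{thm:bounds-dyanmics}, without addressing this.
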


To put it simply, when the data do not have perfect symmetries, approximately equivariant models may have better prediction performance than data augmentations and perfectly equivariant models because of the term $-\|\bm{q}\|_1\|\hat{\theta}_{AE}\|_{EE}$ in the bound. The empirical error of the population minimizer $\xi$ can be small enough to be ignored. 

If approximately equivariant estimators can learn the correct amount of symmetry in the data, which means that $\|\hat{\theta}_{AE}\|_{EE}$ is big and close to the true equivariance error in the data $\|\theta^*\|_{EE}$, then they tend to have better generalizability. On the contrary, the estimators trained on a uniformly augmented training set and perfectly equivariant estimators maintain zero equivariance error even when data are not perfectly symmetric, which is overly restricted.

\section{Conclusion}
We take the first steps in the theoretical understanding of data augmentation and equivariant networks on the task of non-stationary dynamics forecasting. We derive the generalization bounds and show that strictly equivariant networks have a tighter upper bound than data augmentation, and that approximately equivariant estimators can outperform both data augmentation and perfectly equivariant networks on modeling imperfectly symmetric dynamics. A limitation of this work is that our theoretical comparison is only for upper bounds, which can be arbitrarily loose in practice. Future work includes improving the generalizing bounds with Pac-Bayesian analysis and  deriving  lower bounds for these approaches characterizing the hardness of learning for different model classes. 



\bibliography{example_paper}
\bibliographystyle{icml2022}

\newpage
\appendix
\onecolumn
\section{Appendix}
\begin{theorem}\label{proof:bounds-dyanmics}
For any $\delta > 0$, with probability at least $1-\delta$, the following inequality holds for all $\hat{\theta} \in \Theta$ and all $\alpha = \|\bm{q}\|_2/2>0$:
$$\mathbb{E} L(\hat{\theta}, Z_{T+1}) - \mathbb{E} L(\theta^*, Z_{T+1}) \leq 2\text{disc}_T(\bm{q}) + \sqrt{\frac{2 \log (2/\sigma)}{N}} +6M\sqrt{4\pi logT}\mathcal{R}_T^{sq}(L\circ\Theta)  + \|\bm{q}\|_2(M\sqrt{8\text{log}\frac{1}{\delta}} + 1)$$
\end{theorem}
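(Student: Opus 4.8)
The plan is to establish the standard uniform form of the bound --- with probability at least $1-\delta$, $\mathbb{E}L(\theta, Z_{T+1})$ is at most the weighted empirical risk of $\theta$ plus $\text{disc}_T(\bm q)$, the sequential Rademacher term $6M\sqrt{4\pi\log T}\,\mathcal{R}_T^{sq}(L\circ\Theta)$, and two confidence terms, simultaneously over all $\theta\in\Theta$ --- and then read off the excess-risk statement by evaluating it at $\hat\theta$ and at $\theta^*$. Writing $R(\theta) = \mathbb{E}\big[\sum_t q_t L(\theta, Z_t)\big]$ for the weighted expected risk, $\widehat R(\theta) = \frac{1}{N}\sum_{i,t} q_t L(\theta, Z_t^{(i)})$ for the weighted empirical risk, and $\mathcal{R}(\theta) = \mathbb{E} L(\theta, Z_{T+1})$ for the target risk, I would start from the exact identity
\[ \mathcal{R}(\hat\theta) - \mathcal{R}(\theta^*) = \big(\mathcal{R}(\hat\theta) - R(\hat\theta)\big) + \big(R(\hat\theta) - \widehat R(\hat\theta)\big) + \big(\widehat R(\hat\theta) - \widehat R(\theta^*)\big) + \big(\widehat R(\theta^*) - R(\theta^*)\big) + \big(R(\theta^*) - \mathcal{R}(\theta^*)\big). \]
The first and last brackets are each at most $\text{disc}_T(\bm q)$ directly from the definition of the discrepancy measure (a supremum over $\Theta$, hence valid at both $\hat\theta$ and $\theta^*$), which supplies the $2\,\text{disc}_T(\bm q)$ term; the third bracket is nonpositive since $\hat\theta$ is the empirical risk minimizer over $\Theta$ (for Corollaries \ref{thm:bounds-dyanmics-aug}--\ref{thm:bounds-dyanmics-approx-sym} this step is precisely where restriction to a smaller hypothesis class forces the additional Wasserstein bias terms).

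The fourth bracket, $\widehat R(\theta^*) - R(\theta^*)$, involves only the fixed, data-independent predictor $\theta^*$ and is an average of $N$ bounded, independent summands, one per observed time series; Hoeffding's inequality bounds it by $\sqrt{2\log(2/\delta)/N}$ on an event of probability $\ge 1 - \delta/2$, which is the $\sqrt{2\log(2/\sigma)/N}$ term of the statement. What remains is the second bracket, which I would bound by the uniform deviation $\sup_{\theta\in\Theta}\big(R(\theta) - \widehat R(\theta)\big)$ and handle in two steps. First, a bounded-differences argument: viewing the weighted empirical process as a martingale whose increment at step $t$ is controlled by $q_t M$, McDiarmid/Azuma shows that on an event of probability $\ge 1-\delta/2$ the supremum exceeds its expectation by at most a term of order $\|\bm q\|_2\big(M\sqrt{8\log(1/\delta)} + 1\big)$; a free scale parameter $\alpha$ that trades range against quadratic variation enters here, and the statement fixes it to $\alpha = \|\bm q\|_2/2$.

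The genuinely hard step is the second one: bounding $\mathbb{E}\big[\sup_{\theta}\big(R(\theta) - \widehat R(\theta)\big)\big]$ by the sequential Rademacher complexity. Because the $Z_t$ within a single series are dependent, ordinary i.i.d.\ symmetrization is unavailable, so I would follow the sequential-complexity machinery of Rakhlin--Sridharan--Tewari and Kuznetsov--Mohri: introduce a tangent (ghost) sequence $Z_t'$ that is conditionally independent of $Z_t$ given the history, decouple, and symmetrize the resulting differences with Rademacher signs along the paths of a complete binary tree --- exactly the tree $\bm z$ and Rademacher path $\bm\sigma$ in the definition of $\mathcal{R}_T^{sq}$. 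The passage from this tree-indexed process back to $\mathcal{R}_T^{sq}(L\circ\Theta)$ is not free: a discretization/chaining step over the depth-$T$ tree costs a $\sqrt{\log T}$ factor, and tracking the universal constants through symmetrization and this conversion produces the $6M\sqrt{4\pi\log T}$ prefactor. This is the main obstacle, and indeed the only place where the non-stationary, non-mixing nature of the data is really confronted; everything else is bookkeeping. Finally, collecting $2\,\text{disc}_T(\bm q)$, $\sqrt{2\log(2/\delta)/N}$, $6M\sqrt{4\pi\log T}\,\mathcal{R}_T^{sq}(L\circ\Theta)$ and $\|\bm q\|_2(M\sqrt{8\log(1/\delta)}+1)$, and taking a union bound over the two probability-$\delta/2$ events, would give the claimed inequality; since the discrepancy term is by construction the analogue of the one in \citet{kuznetsov2020discrepancy}, the result reduces to theirs in the stochastic-process setting, consistent with the remark following the theorem.
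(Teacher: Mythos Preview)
Your proposal is essentially the paper's proof: the same decomposition (your five terms are the paper's four with their term $I$ split as your first two brackets), the same use of the discrepancy definition for the $2\,\text{disc}_T(\bm q)$, Hoeffding over the $N$ i.i.d.\ series for the $\theta^*$ term, nonpositivity of the ERM gap, and sequential symmetrization via a tangent sequence and tree for the uniform-deviation piece. The one organizational difference is that the paper does not split the uniform deviation into ``Azuma/McDiarmid concentration around the mean'' plus ``bound on the mean via symmetrization''; it bounds $\mathbb{P}(I-\text{disc}_T(\bm q)>\epsilon)$ directly by a Chernoff/Markov argument on the exponential moment, introduces the ghost sequence inside that moment, passes to an $\alpha$-cover of $\mathcal{F}$ on the tree, iterates to get $\exp(2\lambda^2\|\bm q\|_2^2 M^2)$, optimizes $\lambda$, and then invokes Theorem~2 of \citet{kuznetsov2020discrepancy} to convert the covering number to $\mathcal{R}_T^{sq}$ --- this sidesteps the delicate point of applying Azuma directly to a supremum over $\Theta$ in the dependent setting, which your two-step phrasing would need to justify carefully.
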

\vskip 0.2in

\begin{proof}
$$ \mathbb{E} L(\hat{\theta}, Z_{T+1}) - \mathbb{E} L(\theta^*, Z_{T+1}) = I + II + III + IV$$
$$I = \mathbb{E} L(\hat{\theta}, Z_{T+1}) - \frac{1}{N}\sum_{i = 1}^N \sum_{t=1}^T q_tL(\hat{\theta}, Z_{t}^{(i)})$$

$$II = \frac{1}{N}\sum_{i = 1}^N \sum_{t=1}^T q_t L(\hat{\theta}, Z_{t}^{(i)}) - \frac{1}{N}\sum_{i = 1}^N \sum_{t=1}^T q_tL(\theta^*, Z_{t}^{(i)}) \leq 0 \text{(the model does not underfit the data)}$$ 

$$III = \frac{1}{N}\sum_{i = 1}^N \sum_{t=1}^T q_t L(\theta^*, Z_{t}^{(i)}) - \mathbb{E}\sum_{t=1}^T q_tL(\theta^*, Z_t) \leq \sqrt{\frac{2 \log (2/\sigma)}{N}} \text{(time series are i.i.d sampled).} $$

$$IV = \mathbb{E}\sum_{t=1}^T q_tL(\theta^*, Z_t) - \mathbb{E}L(\theta^*, Z_{T+1}) \leq \text{sup}_{\theta \in \Theta}|\mathbb{E}\sum_{t=1}^T q_tL(\theta, Z_t) - \mathbb{E}L(\theta, Z_{T+1})| = \text{disc}_T(\bm{q})$$

Now we only need to bound the first term $I = \mathbb{E} L(\hat{\theta}, Z_{T+1}) - \frac{1}{N}\sum_{i = 1}^N \sum_{t=1}^T q_tL(\hat{\theta}, Z_{t}^{(i)})$
\begin{align*}
    & \mathbb{P}(I - \text{disc}_T(\bm{q}) > \epsilon)\\
    & = \mathbb{P}(\mathbb{E} L(\hat{\theta}, Z_{T+1}) - \frac{1}{N}\sum_{i = 1}^N \sum_{t=1}^T q_tL(\hat{\theta}, Z_{t}^{(i)})-\text{sup}_{\theta \in \Theta}|\mathbb{E}L(\theta, Z_{T+1}) - \mathbb{E}\sum_{t=1}^T q_tL(\theta, Z_t)|> \epsilon)\\
    & \leq \mathbb{P}(\text{sup}_{\theta \in \Theta}|\mathbb{E} L(\theta, Z_{T+1}) - \frac{1}{N}\sum_{i = 1}^N \sum_{t=1}^T q_tL(\theta, Z_{t}^{(i)})| -\text{sup}_{\theta \in \Theta}|\mathbb{E}L(\theta, Z_{T+1}) - \mathbb{E}\sum_{t=1}^T q_tL(\theta, Z_t)|> \epsilon)\\
    & \leq \mathbb{P}(\text{sup}_{\theta \in \Theta}|\mathbb{E} L(\theta, Z_{T+1}) - \frac{1}{N}\sum_{i = 1}^N \sum_{t=1}^T q_tL(\theta, Z_{t}^{(i)}) + \mathbb{E}\sum_{t=1}^T q_tL(\theta, Z_t) - \mathbb{E}L(\theta, Z_{T+1})|> \epsilon)\\
     & \leq \mathbb{P}(\text{sup}_{\theta \in \Theta}|\mathbb{E} L(\theta, Z_{T+1}) - \frac{1}{N}\sum_{i = 1}^N \sum_{t=1}^T q_tL(\theta, Z_{t}^{(i)}) + \mathbb{E}\sum_{t=1}^T q_tL(\theta, Z_t) - \mathbb{E}L(\theta, Z_{T+1})|> \epsilon)\\
     & = \mathbb{P}(\text{sup}_{\theta \in \Theta}|\mathbb{E}\sum_{t=1}^T q_tL(\theta, Z_t)- \frac{1}{N}\sum_{i = 1}^N \sum_{t=1}^T q_tL(\theta, Z_{t}^{(i)}) |> \epsilon)\\
     & = \mathbb{P}(\text{exp}(\lambda\; \text{sup}_{\theta \in \Theta}|\mathbb{E}\sum_{t=1}^T q_tL(\theta, Z_t)- \frac{1}{N}\sum_{i = 1}^N \sum_{t=1}^T q_tL(\theta, Z_{t}^{(i)})) |))> \text{exp}(\lambda\epsilon)) \\
     & \leq \text{exp}(-\lambda\epsilon)\mathbb{E}[\text{exp}(\lambda\; \text{sup}_{\theta \in \Theta}(\mathbb{E}\sum_{t=1}^T q_tL(\theta, Z_t)- \frac{1}{N}\sum_{i = 1}^N \sum_{t=1}^T q_tL(\theta, Z_{t}^{(i)})))] \;\; \text{(by Markov’s inequality)}
\end{align*}

\begin{align*}
    & \mathbb{E}[\text{exp}(\lambda\; \text{sup}_{\theta \in \Theta}(\mathbb{E}\sum_{t=1}^T q_tL(\theta, Z_t)- \frac{1}{N}\sum_{i = 1}^N \sum_{t=1}^T q_tL(\theta, Z_{t}^{(i)})))] \\
    & = \mathbb{E}[\text{exp}(\lambda\;\text{sup}_{\theta \in \Theta}(\frac{1}{N}\sum_{i = 1}^N \sum_{t=1}^Tq_t(\mathbb{E} L(\theta, Z_t)-  L(\theta, Z_{t}^{(i)})))] \\
    & = \mathbb{E}[\text{exp}(\lambda\;\text{sup}_{\theta \in \Theta}(\frac{1}{N}\sum_{i = 1}^N \sum_{t=1}^Tq_t(\mathbb{E}[L(\theta, Z_t) |  Z_0]-  L(\theta, Z_{t}^{(i)})))] \\
    & \leq \mathbb{E}[\mathbb{E}_{Z_0\sim\mathcal{X}^k\times\mathcal{X}}\text{exp}(\lambda\;\text{sup}_{\theta \in \Theta}[(\frac{1}{N}\sum_{i = 1}^N \sum_{t=1}^Tq_t(L(\theta, Z_t) -  L(\theta, Z_{t}^{(i)}))| Z_0])] \\
    & = \mathbb{E}[\text{exp}(\lambda\;\text{sup}_{\theta \in \Theta}[(\frac{1}{N}\sum_{i = 1}^N \sum_{t=1}^Tq_t(L(\theta, Z'_t) -  L(\theta, Z_{t}^{(i)})))] \\
    & = \mathbb{E}[\text{exp}(\lambda\;\text{sup}_{\theta \in \Theta}[( \sum_{t=1}^Tq_t(L(\theta, Z'_t) -  L(\theta, Z^*_{t})))] \\
    & = \mathbb{E}\mathbb{E}_{\bm{\sigma}}[\text{exp}(\lambda\;\text{sup}_{\theta \in \Theta}[( \sum_{t=1}^T\sigma_t q_t(L(\theta, Z'_t) -  L(\theta, Z^*_{t})))] \\
    & = \mathbb{E}_{\bm{z}^*, \bm{z}'}\mathbb{E}_{\bm{\sigma}}[\text{exp}(\lambda\;\text{sup}_{\theta \in \Theta}[( \sum_{t=1}^T\sigma_t q_t(f(\bm{z}^*_t(\bm{\sigma})) -  f(\bm{z}'_t(\bm{\sigma}))))]\; \text{replace} \; L(\theta, \cdot)\;  \text{with} \;  f \;  \text{for simplicity}. \\
    & \leq \mathbb{E}_{\bm{z}^*}\mathbb{E}_{\bm{\sigma}}[\text{exp}(2\lambda\;\text{sup}_{f \in \mathcal{F}}\sum_{t=1}^T\sigma_t q_t f(\bm{z}^*_t(\bm{\sigma})))]
\end{align*}

Given $\bm{z}^*$, let $C$ be the minimal $\alpha$-cover of $\mathcal{F}$ on $\bm{z}^*$,
$$\text{sup}_{f \in \mathcal{F}}\sum_{t=1}^T\sigma_t q_t f(\bm{z}^*_t(\bm{\sigma}))) \leq \text{max}_{\bm{c}\in C}\sum_{t=1}^T\sigma_t q_t \bm{c}_t(\bm{\sigma}) + \alpha$$
Thus, 
\begin{align*}
    & \mathbb{E}_{\bm{\sigma}}[\text{exp}(2\lambda\;\text{sup}_{f \in \mathcal{F}}\sum_{t=1}^T\sigma_t q_t f(\bm{z}^*_t(\bm{\sigma})))]\\
    & \leq \text{exp}(2\lambda\alpha)\mathbb{E}_{\bm{\sigma}}[\text{exp}(2\lambda\;\text{max}_{\bm{c}\in C}\sum_{t=1}^T\sigma_t q_t  \bm{c}_t(\bm{\sigma}))] \\
    & \leq  \text{exp}(2\lambda\alpha)\text{max}_{\bm{c}\in C}\mathbb{E}_{\bm{\sigma}}[\text{exp}(2\lambda\;\sum_{t=1}^T\sigma_t q_t \bm{c}_t(\bm{\sigma}))] \\
    & =  \text{exp}(2\lambda\alpha)\text{max}_{\bm{c}\in C}\mathbb{E}_{\bm{\sigma}}[\text{exp}(2\lambda\;\sum_{t=1}^{T-1}\sigma_t q_t \bm{c}_t(\bm{\sigma}))\; \mathbb{E}_{\bm{\sigma}^T} [\text{exp}(2\lambda \; \sigma_T q_T \bm{c}_T(\bm{\sigma}))|\bm{\sigma}_{1:T-1}]] \\
    & \leq \text{exp}(2\lambda\alpha)\text{max}_{\bm{c}\in C}\mathbb{E}_{\bm{\sigma}}[\text{exp}(2\lambda\;\sum_{t=1}^{T-1}\sigma_t q_t \bm{c}_t(\bm{\sigma}))\; \text{exp}(2\lambda^2 q_T^2 M^2) ]\\
    & \leq \text{exp}(2\lambda\alpha) \text{exp}(2\lambda^2 \|\bm{q}\|^2_2 M^2) \; (\text{Iterate the last inequality over} \; t)
\end{align*}
Then we have
\begin{align*}
&\mathbb{P}(I - \text{disc}_T(\bm{q}) > \epsilon) \leq \mathbb{E}_{\bm{z}}[\mathcal{N}_1(\alpha, \Theta, \bm{z})] \text{exp}(2\lambda\alpha - \lambda\epsilon + 2\lambda^2 \|\bm{q}\|^2_2 M^2)
\end{align*}
Optimize $\lambda$
\begin{align*}
&\mathbb{P}(I - \text{disc}_T(\bm{q}) > \epsilon) \leq \mathbb{E}_{\bm{z}}[\mathcal{N}_1(\alpha, \Theta, \bm{z})] \text{exp}(\frac{(\epsilon - 2\alpha)^2}{8\|\bm{q}\|^2_2 M^2})
\end{align*}
Finally, $\mathbb{E}_{\bm{z}}[\mathcal{N}_1(\alpha, \Theta, \bm{z})]$ can be further bounded by the sequential Rademacher complexity based on the Theorem 2 in \cite{kuznetsov2020discrepancy}. 
\end{proof}

\begin{corollary}\label{proof:bounds-dyanmics-sym}
Let $L(\theta, \cdot)$ be Lipschitz uniformly over $\theta$, with a Lipschitz constant $\|L\|_{\text{Lip}}$. Assume $\frac{1}{N}\sum_{i = 1}^N \sum_{t=1}^T q_tL(\theta^*, Z_{t}^{(i)})\leq\xi$. For any $\delta > 0$, with probability at least $1-\delta$, the following inequality holds for all $\alpha = \|\bm{q}\|_2/2>0$:

{\setlength{\mathindent}{0cm}
\begin{equation}
\begin{aligned}
\mathbb{E} L(\hat{\theta}_G, Z_{T+1}) - \mathbb{E} L(\theta^*, Z_{T+1}) & \leq 2\text{disc}_T(\bm{q}) + 6M\sqrt{4\pi logT}\mathcal{R}_T^{sq}(L\circ\Theta_G) + \Delta \\
& + \textrm{max}_{t,i}\;\|L\|_{\text{Lip}}\cdot\mathbb{E}_G [\mathcal{W}(Z_{T+1}, g Z_{T+1}) + q_t\mathcal{W}(Z_{t}^{(i)}, g Z_{t}^{(i)})]
\end{aligned}
\end{equation}
}

{\setlength{\mathindent}{0cm}
\begin{equation}
\begin{aligned}
\mathbb{E} L(\hat{\theta}_E, Z_{T+1}) - \mathbb{E} L(\theta^*, Z_{T+1}) & \leq 2\text{disc}_T(\bm{q}) +6M\sqrt{4\pi logT}\mathcal{R}_T^{sq}(L\circ\Theta_E) + \Delta \\
& + \textrm{max}_{t,i}\;\|L\|_{\text{Lip}}\cdot\mathbb{E}_G \mathcal{W}(Z_{t}^{(i)}, g Z_{t}^{(i)})
\end{aligned}
\end{equation}
}

{\setlength{\mathindent}{0cm}
\begin{equation}
\begin{aligned}
\mathbb{E} L(\hat{\theta}_{AE}, Z_{T+1}) - \mathbb{E} L(\theta^*, Z_{T+1}) & \leq 2\text{disc}_T(\bm{q}) + 6M\sqrt{4\pi logT}\mathcal{R}_T^{sq}(L\circ\Theta_{AE}) + \Delta \\
& + \|L\|_{\text{Lip}}\cdot\mathbb{E}_G \mathcal{W}(Z_{T+1}, g Z_{T+1}) - \|\bm{q}\|_1\|\hat{\theta}_{AE}\|_{EE} + 2\xi
\end{aligned}
\end{equation}
}
where $\Delta = \sqrt{\frac{2 \log (2/\sigma)}{N}} + \|\bm{q}\|_2(M\sqrt{8\text{log}\frac{1}{\delta}} + 1)$. 
\end{corollary}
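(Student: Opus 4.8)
The plan is to derive all three displays as instances of Theorem~\ref{thm:bounds-dyanmics}: in each case the relevant estimator is an empirical risk minimizer for \emph{some} loss over \emph{some} hypothesis class, so the master bound applies once we account for the mismatch between the loss actually minimized and the target loss $L$ evaluated at $Z_{T+1}$. The glue is an elementary transfer estimate: since $L(\theta,\cdot)$ is $\|L\|_{\text{Lip}}$-Lipschitz and $\bar L(\theta,z)=\mathbb{E}_G[L(\theta,gz)]$, the Kantorovich--Rubinstein inequality gives $|\mathbb{E}L(\theta,Z)-\mathbb{E}\bar L(\theta,Z)|\le \|L\|_{\text{Lip}}\,\mathbb{E}_G\mathcal{W}(Z,gZ)$ for every $\theta$ and every random element $Z$, and likewise with the $\bm q$-weighted sum over the $Z_t$; feeding this into the definition of the discrepancy shows $\text{disc}^{\bar L}_T(\bm q)\le \text{disc}_T(\bm q)+\|L\|_{\text{Lip}}\,\mathbb{E}_G\big[\sum_t q_t\mathcal{W}(Z_t,gZ_t)+\mathcal{W}(Z_{T+1},gZ_{T+1})\big]$. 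Note also that $\bar L$ is still bounded by $M$, so the concentration and covering-number steps in the proof of Theorem~\ref{thm:bounds-dyanmics} transfer unchanged.

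For data augmentation, $\hat\theta_G$ is \emph{exactly} the empirical minimizer of the orbit-averaged risk over the full class $\Theta$, so I would invoke Theorem~\ref{thm:bounds-dyanmics} verbatim with $L$ replaced by $\bar L$ and $\theta^*$ replaced by $\bar\theta^* := \text{argmin}_\theta\,\mathbb{E}\bar L(\theta,Z_{T+1})$, obtaining a high-probability bound on $\mathbb{E}\bar L(\hat\theta_G,Z_{T+1})-\mathbb{E}\bar L(\bar\theta^*,Z_{T+1})$ with complexity term $\mathcal{R}_T^{sq}(\bar L\circ\Theta)$, discrepancy $\text{disc}^{\bar L}_T(\bm q)$, and residual $\Delta$. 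Then I would convert back to $L$ at the test point using the transfer estimate on $\hat\theta_G$, use optimality of $\theta^*$ for $\mathbb{E}L(\cdot,Z_{T+1})$ together with the transfer estimate on $\bar\theta^*$ to replace $-\mathbb{E}\bar L(\bar\theta^*,Z_{T+1})$ by $-\mathbb{E}L(\theta^*,Z_{T+1})$ up to a Wasserstein term, and substitute the bound on $\text{disc}^{\bar L}_T(\bm q)$; consolidating the Wasserstein contributions (which is where the $\textrm{max}_{t,i}$ and the $q_t$ weighting enter) gives the first display.

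For strictly equivariant networks, the key observation is that for $\theta\in\Theta_E$ the equivariance of $f_\theta$ and the norm-preserving assumption $\|g\|=1$ render the loss $G$-invariant, $L(\theta,gz)=L(\theta,z)$, hence $\bar L(\theta,z)=L(\theta,z)$ on $\Theta_E$; this is precisely the content of Lemma~\ref{lemma-complexity}, and it lets me rerun the previous argument restricted to $\Theta_E$ with $\mathcal{R}_T^{sq}(\bar L\circ\Theta)$ replaced by the smaller $\mathcal{R}_T^{sq}(L\circ\Theta_E)$ and with the training-sample Wasserstein terms disappearing (no transfer is needed on the empirical risk), the analogue of term~$II$ in the proof of Theorem~\ref{thm:bounds-dyanmics}, namely the comparison of the empirical risk of $\hat\theta_E$ with that of $\theta^*$, being supplied by the non-underfitting hypothesis; only the single bias $\|L\|_{\text{Lip}}\mathbb{E}_G\mathcal{W}(Z_{T+1},gZ_{T+1})$ survives, giving the second display. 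For the approximately equivariant estimator the same computation goes through verbatim except that the equality $L(\theta,gz)=L(\theta,z)$ is replaced by $|L(\theta,gz)-L(\theta,z)|\le\|\hat\theta_{AE}\|_{EE}$ (again using norm-preservation of the $G$-action, now with the $\epsilon$-equivariance slack), so the empirical-risk transfer costs at most $\|\bm q\|_1\|\hat\theta_{AE}\|_{EE}$; carrying this slack with the favorable sign and using the hypotheses $\|\hat\theta_{AE}\|_{EE}\le\|\theta^*\|_{EE}$ and $\frac{1}{N}\sum_{i,t}q_t L(\theta^*,Z_t^{(i)})\le\xi$ to absorb the remaining $\theta^*$-dependent contributions into $2\xi$ yields the $-\|\bm q\|_1\|\hat\theta_{AE}\|_{EE}+2\xi$ correction.

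I expect the main difficulty to be twofold. First, the Wasserstein bookkeeping: one must be careful about \emph{where} the transfer estimate is applied (test point versus training marginals, inside the discrepancy versus inside the excess-risk decomposition) so that the coefficients and the $\textrm{max}_{t,i}$ in the displayed biases come out as stated rather than as larger multiples. Second, and more essential, is pinning down the \emph{sign} of the equivariance-error correction in the approximately equivariant bound: one needs the inequality between $L$ and $\bar L$ on $\hat\theta_{AE}$ to point in the direction that \emph{decreases} the bound, which is exactly where the combination of the norm-preserving group action, the induced approximate loss invariance, the non-underfitting comparison with $\theta^*$, and the assumption $\|\hat\theta_{AE}\|_{EE}\le\|\theta^*\|_{EE}$ must be used together; without that alignment one would only recover an absolute value and approximate equivariance would not come out favorable.
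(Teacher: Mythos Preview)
Your plan for $\hat\theta_G$ matches the paper: orbit-average the loss, apply the master bound, and use Kantorovich--Rubinstein to convert between $L$ and $\bar L$ at the test point and on the weighted training marginals. The paper phrases this as a three-term split $I=A+B+C$ of the generalization term from Theorem~\ref{thm:bounds-dyanmics}, but the content is the same.

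For $\hat\theta_E$ and $\hat\theta_{AE}$ the paper takes a different route, and this is precisely where your proposal leaves a gap. You propose to do the $L\leftrightarrow\bar L$ transfer \emph{at the estimator} $\hat\theta$; the paper instead inserts the group-averaging on the $\theta^*$ side via a six-term decomposition
\[
\mathbb{E}L(\hat\theta,Z_{T+1})-\mathbb{E}L(\theta^*,Z_{T+1})=I+II+III+IV+V+VI,
\]
where the pivotal term is
\[
III=\frac{1}{N}\sum_{i,t}q_t\bigl[L(\theta^*,Z_t^{(i)})-\mathbb{E}_G L(\theta^*,gZ_t^{(i)})\bigr].
\]
For $\hat\theta_E$ the paper sets $III=0$ directly. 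For $\hat\theta_{AE}$ it expands $L(\theta^*,gz)=\|gy-f_{\theta^*}(gx)\|$ and applies the \emph{reverse} triangle inequality at $\theta^*$,
\[
\|gy-f_{\theta^*}(gx)\|\ge \|gf_{\theta^*}(x)-f_{\theta^*}(gx)\|-\|gy-gf_{\theta^*}(x)\|,
\]
so that, using $\|g\|=1$, each summand is bounded above by $-\|gf_{\theta^*}(x)-f_{\theta^*}(gx)\|+2L(\theta^*,z)$; the $2\xi$ comes from the hypothesis on the empirical risk of $\theta^*$, and the assumption $\|\hat\theta_{AE}\|_{EE}\le\|\theta^*\|_{EE}$ is invoked to replace the $\theta^*$ equivariance-error contribution by $\|\hat\theta_{AE}\|_{EE}$.

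Your route cannot produce the negative sign. From the estimator-side bound $|L(\hat\theta_{AE},gz)-L(\hat\theta_{AE},z)|\le\|\hat\theta_{AE}\|_{EE}$ you only get $|\bar L-L|\le\|\hat\theta_{AE}\|_{EE}$, which after summing yields at best $+\|\bm q\|_1\|\hat\theta_{AE}\|_{EE}$; ``carrying the slack with the favorable sign'' is not justified by anything you have written, and the two hypotheses you list do not help here because they concern $\theta^*$, not $\hat\theta_{AE}$. The paper's mechanism for the favorable sign is the reverse triangle inequality applied to $f_{\theta^*}$ inside term~$III$, which creates a genuinely negative term $-\|gf_{\theta^*}(x)-f_{\theta^*}(gx)\|$ before any absolute values are taken, and only then uses $\|\hat\theta_{AE}\|_{EE}\le\|\theta^*\|_{EE}$. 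To repair your argument you need to move the group-averaging to the $\theta^*$ side of the decomposition.
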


\begin{proof}

When $\hat{\theta} = \hat{\theta}_G$, we only need to derive a bound for I in the previous proof. 
{\setlength{\mathindent}{0cm}
\begin{equation}
\begin{aligned}
&I = \mathbb{E} L(\hat{\theta}, Z_{T+1}) - \frac{1}{N}\sum_{i = 1}^N \sum_{t=1}^T q_t L(\hat{\theta}, Z_{t}^{(i)}) = A + B + C \\
&A = \mathbb{E} L(\hat{\theta}, Z_{T+1}) - \mathbb{E}\mathbb{E}_G L(\hat{\theta}, g Z_{T+1}) \leq \mathbb{E}_G |\mathbb{E} L(\hat{\theta}_G, Z_{T+1}) - \mathbb{E}L(\hat{\theta}_G, g Z_{T+1})|  \leq \|L\|_{\text{Lip}}\cdot\mathbb{E}_G \mathcal{W}(Z_{T+1}, g Z_{T+1})\\
&B = \mathbb{E}\mathbb{E}_G L(\hat{\theta}, g Z_{T+1}) - \frac{1}{N}\sum_{i = 1}^N \sum_{t=1}^T q_t\mathbb{E}_GL(\hat{\theta}, g Z_{t}^{(i)})  \leq \text{disc}_T(\bm{q}) +6M\sqrt{4\pi logT}\mathcal{R}_T^{sq}(L\circ\Theta_G) + \Delta \\
&C = \frac{1}{N}\sum_{i = 1}^N \sum_{t=1}^T q_t\mathbb{E}_GL(\hat{\theta}, g Z_{t}^{(i)}) - \frac{1}{N}\sum_{i = 1}^N \sum_{t=1}^T q_tL(\hat{\theta}, Z_{t}^{(i)}) \leq \frac{1}{N}\sum_{i = 1}^N \sum_{t=1}^T q_t[\mathbb{E}_GL(\hat{\theta}_G, g Z_{t}^{(i)}) - L(\hat{\theta}_G, Z_{t}^{(i)})] \\ 
& \;\;\;\leq \textrm{max}_{t,i}\;\|L\|_{\text{Lip}}\cdot\mathbb{E}_G \mathcal{W}(Z_{t}^{(i)}, g Z_{t}^{(i)})
\end{aligned}
\end{equation}
}

When $\hat{\theta} = \hat{\theta}_{AE}$: 
$$ \mathbb{E} L(\hat{\theta}_{AE}, Z_{T+1}) - \mathbb{E} L(\theta^*, Z_{T+1}) = I + II + III + IV + V + VI$$
{\setlength{\mathindent}{0cm}
\begin{equation}
\begin{aligned}
&I = \mathbb{E} L(\hat{\theta}_{AE}, Z_{T+1}) - \frac{1}{N}\sum_{i = 1}^N \sum_{t=1}^T q_tL(\hat{\theta}_{AE}, Z_{t}^{(i)}) \leq \text{disc}_T(\bm{q}) +6M\sqrt{4\pi logT}\mathcal{R}_T^{sq}(L\circ\Theta_{AE}) + \Delta \\
&II = \frac{1}{N}\sum_{i = 1}^N \sum_{t=1}^T q_tL(\hat{\theta}_{AE}, Z_{t}^{(i)}) - \frac{1}{N}\sum_{i = 1}^N \sum_{t=1}^T q_tL(\theta^*, Z_{t}^{(i)}) \leq 0 \;\text{(The model does not underfit the data)}\\
&IV = \frac{1}{N}\sum_{i = 1}^N \sum_{t=1}^T q_t \mathbb{E}_GL(\theta^*, g Z_{t}^{(i)}) - \sum_{t=1}^T q_t \mathbb{E}\mathbb{E}_GL(\theta^*, g Z_{t})\leq \sqrt{\frac{2 \log (2/\sigma)}{N}} \\
&V = \sum_{t=1}^T q_t \mathbb{E}\mathbb{E}_GL(\theta^*, g Z_{t}^{(i)}) - \mathbb{E}\mathbb{E}_GL(\theta^*, g Z_{T+1}) = \sum_{t=1}^T q_t \mathbb{E}\bar{L}(\theta^*, g Z_{t}^{(i)}) - \mathbb{E}\bar{L}(\theta^*, g Z_{T+1})  \\
& \;\;\;\leq \text{sup}_{\theta \in \Theta}\left|\mathbb{E}\left[\sum_{t=1}^T q_t \bar{L}(\theta, Z_t)\right] - \mathbb{E}\bar{L}(\theta, Z_{T+1})\right| \leq \text{disc}_T(\bm{q})\\
&VI = \mathbb{E}\mathbb{E}_GL(\theta^*, g Z_{T+1}) - \mathbb{E}L(\theta^*, Z_{T+1}) \leq \|L\|_{\text{Lip}}\cdot\mathbb{E}_G \mathcal{W}(Z_{T+1}, g Z_{T+1}) \\
&III = \frac{1}{N}\sum_{i = 1}^N \sum_{t=1}^T q_tL(\theta^*, Z_{t}^{(i)}) - \frac{1}{N}\sum_{i = 1}^N \sum_{t=1}^T q_t \mathbb{E}_GL(\theta^*, g Z_{t}^{(i)}) \;\;\text{(let $x_t^{(i)} = X_{t-k-1:t-1}^{(i)}$ and $y_t^{(i)} = X_{t}^{(i)}$)}\\
& \;\;\; = \frac{1}{N}\sum_{i = 1}^N \sum_{t=1}^T q_t \mathbb{E}_G [\|y_t^{(i)} - f_{\theta^*}(x_t^{(i)}) \| - \|g y_t^{(i)} - f_{\theta^*}(gx_t^{(i)}) \|]\\
& \;\;\; = \frac{1}{N}\sum_{i = 1}^N \sum_{t=1}^T q_t \mathbb{E}_G [\|y_t^{(i)} - f_{\theta^*}(x_t^{(i)}) \| - \|g y_t^{(i)} - gf_{\theta^*}(x_t^{(i)}) + gf_{\theta^*}(x_t^{(i)})- f_{\theta^*}(gx_t^{(i)}) \|]\\
& \;\;\; \leq \frac{1}{N}\sum_{i = 1}^N \sum_{t=1}^T q_t \mathbb{E}_G [-\|gf_{\theta^*}(x_t^{(i)})- f_{\theta^*}(gx_t^{(i)})\| + \|y_t^{(i)} - f_{\theta^*}(x_t^{(i)}) \| + \|g y_t^{(i)} - gf_{\theta^*}(x_t^{(i)})  \|]\\
& \;\;\; \leq -\|\bm{q}\|_1\|\hat{\theta}_{AE}\|_{EE} + 2\xi
\end{aligned}
\end{equation}
}

When $\hat{\theta} = \hat{\theta}_E$: 
$$ \mathbb{E} L(\hat{\theta}_{E}, Z_{T+1}) - \mathbb{E} L(\theta^*, Z_{T+1}) = I + II + III + IV + V + VI$$
{\setlength{\mathindent}{0cm}
\begin{equation}
\begin{aligned}
&I = \mathbb{E} L(\hat{\theta}_{E}, Z_{T+1}) - \frac{1}{N}\sum_{i = 1}^N \sum_{t=1}^T q_tL(\hat{\theta}_{E}, Z_{t}^{(i)}) \leq \text{disc}_T(\bm{q}) +6M\sqrt{4\pi logT}\mathcal{R}_T^{sq}(L\circ\Theta_{E}) + \Delta \\
&II = \frac{1}{N}\sum_{i = 1}^N \sum_{t=1}^T q_tL(\hat{\theta}_{E}, Z_{t}^{(i)}) - \frac{1}{N}\sum_{i = 1}^N \sum_{t=1}^T q_tL(\theta^*, Z_{t}^{(i)}) \leq 0 \;\text{(The model does not underfit the data)}\\
&III = \frac{1}{N}\sum_{i = 1}^N \sum_{t=1}^T q_tL(\theta^*, Z_{t}^{(i)}) - \frac{1}{N}\sum_{i = 1}^N \sum_{t=1}^T q_t \mathbb{E}_GL(\theta^*, g Z_{t}^{(i)}) = 0\\
&IV = \frac{1}{N}\sum_{i = 1}^N \sum_{t=1}^T q_t \mathbb{E}_GL(\theta^*, g Z_{t}^{(i)}) - \sum_{t=1}^T q_t \mathbb{E}\mathbb{E}_GL(\theta^*, g Z_{t})\leq \sqrt{\frac{2 \log (2/\sigma)}{N}} \\
&V = \sum_{t=1}^T q_t \mathbb{E}\mathbb{E}_GL(\theta^*, g Z_{t}^{(i)}) - \mathbb{E}\mathbb{E}_GL(\theta^*, g Z_{T+1}) = \sum_{t=1}^T q_t \mathbb{E}\bar{L}(\theta^*, g Z_{t}^{(i)}) - \mathbb{E}\bar{L}(\theta^*, g Z_{T+1})  \\
& \;\;\;\leq \text{sup}_{\theta \in \Theta}\left|\mathbb{E}\left[\sum_{t=1}^T q_t \bar{L}(\theta, Z_t)\right] - \mathbb{E}\bar{L}(\theta, Z_{T+1})\right| \leq \text{disc}_T(\bm{q})\\
&VI = \mathbb{E}\mathbb{E}_GL(\theta^*, g Z_{T+1}) - \mathbb{E}L(\theta^*, Z_{T+1}) \leq \|L\|_{\text{Lip}}\cdot\mathbb{E}_G \mathcal{W}(Z_{T+1}, g Z_{T+1}) 
\end{aligned}
\end{equation}
}
Combining the bounds for the six terms gives the desired result.

Moreover, 
{\setlength{\mathindent}{0cm}
\begin{equation}\label{app:complex-sym-aug}
\begin{aligned}
& \mathcal{R}_T^{sq}(L\circ\Theta_E) - \mathcal{R}_T^{sq}(\bar{L}\circ\Theta) \\
& \leq  \mathbb{E}_{\bm{\sigma}}[\text{sup}_{\theta_E\in\Theta_E}\frac{1}{N} \sum_{i=1}^N\sum_{t=1}^T\sigma_t q_t L(\theta_E, Z_t)] - \mathbb{E}_{\bm{\sigma}}[\text{sup}_{\theta\in\Theta}\frac{1}{N} \sum_{i=1}^N\sum_{t=1}^T\sigma_t q_t \mathbb{E}_GL(\theta, gZ_t)] \\
& \leq \mathbb{E}_{\bm{\sigma}}[\text{sup}_{\theta_E\in\Theta_E}\frac{1}{N} \sum_{i=1}^N\sum_{t=1}^T\sigma_t q_t \mathbb{E}_GL(\theta_E, gZ_t)] - \mathbb{E}_{\bm{\sigma}}[\text{sup}_{\theta\in\Theta}\frac{1}{N} \sum_{i=1}^N\sum_{t=1}^T\sigma_t q_t \mathbb{E}_GL(\theta, gZ_t)] \\
& = \mathbb{E}_{\bm{\sigma}}\mathbb{E}_G[\text{sup}_{\theta_E\in\Theta_E}\frac{1}{N} \sum_{i=1}^N\sum_{t=1}^T\sigma_t q_t L(\theta_E, gZ_t) - \text{sup}_{\theta\in\Theta}\frac{1}{N} \sum_{i=1}^N\sum_{t=1}^T\sigma_t q_t L(\theta, gZ_t)]\\
& \leq \mathbb{E}_{\bm{\sigma}}\mathbb{E}_G[\text{sup}_{\theta_E\in\Theta_E}\frac{1}{N} \sum_{i=1}^N\sum_{t=1}^T\sigma_t q_t L(\theta_E, gZ_t) - \text{sup}_{\theta\in\Theta_E}\frac{1}{N} \sum_{i=1}^N\sum_{t=1}^T\sigma_t q_t L(\theta, gZ_t)]\\
& = 0
\end{aligned}
\end{equation}
}
\end{proof}

\end{document}